\theoremstyle{plain}
\newtheorem{theorem}{Theorem}[section]
\newtheorem{lemma}[theorem]{Lemma}
\newtheorem{proposition}[theorem]{Proposition}
\newtheorem{corollary}[theorem]{Corollary}
\theoremstyle{definition}
\newtheorem{remark}[theorem]{Remark}
\newcommand{\R}{\mathbb{R}}
\DeclareMathOperator*{\argmax}{arg\,max}
\newcommand{\eps}{\epsilon}
\newcommand{\sig}{\sigma}
\renewcommand{\P}{\mathsf{P}}
\newcommand{\E}{\mathsf{E}}
\newcommand{\inv}{^{-1}}
\newcommand{\lam}{\lambda}
\newcommand{\al}{\alpha}
\newcommand{\KL}{{D_\text{KL}}}
\newcommand{\dtv}{{D_\text{TV}}}
\newcommand{\ind}{\mathds{1}}
\newcommand{\ctv}{\nu}
\newcommand{\empctv}{{\widehat\nu}}
\newcommand{\Ph}{{\widehat \P}}
\renewcommand{\S}{\widetilde{\SS}}
\newcommand{\atv}{\mathcal{A}}
\newcommand{\VV}{\mathcal{V}}
\renewcommand{\SS}{\mathcal{S}}
\newcommand{\TT}{\mathcal{T}}
\newcommand{\WW}{\mathcal{W}}
\newcommand{\avinf}[3]{{\nu^{\texttt{avg}}_{#1|#2;#3}}}
\newcommand{\eavinf}[3]{{\widehat{\nu}^{\texttt{avg}}_{#1|#2;#3}}}
\newcommand{\elam}{\widehat{\lambda}}
\newcommand{\overbar}[1]{\mkern 1.5mu\overline{\mkern-1.5mu#1\mkern-1.5mu}\mkern 1.5mu}
\newcommand{\Ot}{\widetilde{\mathcal{O}}}
\newcommand{\OO}{\mathcal{O}}
\begin{document}
\author{Guy Bresler \\ \\ Laboratory for Information and Decision Systems \\ 
Department of Electrical Engineering and Computer Science 
\\
Massachusetts Institute of Technology \\ \texttt{gbresler@mit.edu}}

\title{Efficiently learning Ising models on arbitrary graphs}

\date{}

\maketitle
\begin{abstract}
We consider the problem of reconstructing the graph underlying an Ising model from i.i.d. samples. Over the last fifteen years this problem has been of significant interest in the statistics, machine learning, and statistical physics communities, and much of the effort has been directed towards finding algorithms with low computational cost for various restricted classes of models. 
Nevertheless, for learning Ising models on general graphs with $p$ nodes of degree at most $d$, it is not known whether or not it is possible to improve upon the $p^{d}$ computation needed to exhaustively search over all possible neighborhoods for each node.

In this paper we show that a simple greedy procedure allows to learn the structure of an Ising model on an arbitrary bounded-degree graph in time on the order of $p^2$. We make no assumptions on the parameters except what is necessary for identifiability of the model, and in particular the results hold at low-temperatures as well as for highly non-uniform models. 
The proof rests on a new structural property of Ising models: we show that for any node there exists at least one neighbor with which it has a high mutual information. This structural property may be of independent interest. 
\end{abstract}

\thispagestyle{empty}

\clearpage
\setcounter{page}{1}

\section{Introduction}
Undirected graphical models, or Markov random fields, are a general and powerful framework for reasoning about high dimensional distributions and are at the core of modern statistical inference. The joint probability distribution specified by such a model factorizes according to an underlying graph, and the absence of edges encodes conditional independence \cite{lauritzen1996graphical}. The graph structure captures the computational aspect inherent in tasks of statistical inference including computing marginals, maximum \emph{a posteriori} assignments, the partition function, or sampling from the distribution. In addition to their statistical relevance, such computations on graphical models include as special cases many combinatorial optimization and counting problems. 

The Ising model is a Markov random field having binary variables and pairwise potential functions. The Ising model has a long and celebrated history starting with its introduction by statistical physicists as a model for spin systems in order to understand the phenomena of \emph{phase transition} \cite{ising1925beitrag,RevModPhys}. 
It has since been used across a wide spectrum of application domains including finance, social networks, computer vision, biology, and signal processing. The understanding of the computational tractability of inference (computing the partition function and sampling) has recently seen significant progress \cite{jerrum1993polynomial,Weitz,sinclair2014approximation,sly2010computational,sly2012computational, goldberg2003computational}.

The inverse problem of learning models from data is equally important. 
Once the underlying graph is known it is relatively easy to estimate the parameters, hence the focus is largely on the task of \emph{structure learning}, i.e., estimating the graph.
Study of this problem was initiated by
Chow and Liu in their seminal 1968 paper~\cite{chow1968approximating}, which gave a greedy algorithm for learning \emph{tree-structured} Markov random fields with runtime on the order of $p^2$ for graphs on $p$ nodes. They showed that the maximum likelihood graph is a maximum-weight spanning tree, where each edge has weight equal to the mutual information of the variables at its endpoints. The maximum-likelihood tree can thus be found by a greedy algorithm, for example using Kruskal's or Prim's algorithms, and the running-time of the Chow-Liu algorithm is dominated by the time required to compute the mutual information between all pairs of nodes in the graph. 

For graphs with loops the problem is much more challenging for two reasons: a node and its neighbor can be \emph{marginally independent} due to indirect path effects, and moreover, this difficulty is compounded by presence of long-range correlations in the model, in which case distant nodes can be more correlated than nearby nodes. 
As discussed next in Subsection~\ref{ss:Background}, a basic first-order question has remained unanswered: it is not known if it is possible to learn the structure of Ising models on general graphs with $p$ nodes of degree at most $d$ in time less than $p^d$. This is roughly the time required to exhaustively search over all $p\choose d$ possible neighborhoods of a node and for each such candidate neighborhood test whether or not the implied conditional independence holds \cite{BMS08}.

In this paper we show that despite these challenges, a simple greedy algorithm allows to reconstruct arbitrary Ising models on $p$ nodes of maximum degree $d$ in time $\Ot(p^2)$, where the $\Ot(\cdot)$ notation hides a factor $\log p$ as well as a constant depending (doubly-exponentially) on $d$. Exponential dependence on $d$ is unavoidable as the number of samples must itself increase exponentially in $d$ \cite{santhanam2012information}, but doubly-exponential dependence on $d$ is probably suboptimal. 
The implication of our result is that learning Ising models on \emph{arbitrary graphs} of bounded degree has essentially the same computational complexity as learning such models on \emph{tree graphs}. 
The proof rests on a new basic property of Ising models: we show that for any node, there exists at least one neighbor with which it has high mutual information, even conditional on any subset of nodes. 

\subsection{Complexity of graphical model learning}\label{ss:Background}

A number of papers, including \cite{abbeel2006learning}, \cite{BMS08}, and~\cite{csiszar2006consistent}, 
have suggested to find each node's neighborhood by exhaustively searching over candidate neighborhoods and checking conditional independence.  
For graphical models on $p$ nodes of maximum degree $d$, such a search takes time (at least) on the order of $p^d$. As $d$ grows, the computational cost becomes prohibitive, and much work has focused on trying to find algorithms with lower complexity. 
Writing algorithm runtime in the form $f(d) p^{c(d)}$, for high-dimensional (large $p$) models the exponent $c(d)$ is of primary importance. We will think of efficient algorithms as having an exponent $c(d)$ that is bounded by a constant independent of $d$. 

Previous works proposing efficient algorithms either restrict the graph structure or the nature of the interactions between variables. Chow and Liu~\cite{chow1968approximating} made a model restriction of the first type, assuming that the graph is a tree; generalizations include to polytrees~\cite{dasgupta1999learning}, hypertrees~\cite{srebro2001maximum}, tree mixtures~\cite{anandkumar2012learningB},
and others. Among the many possible assumptions of the second type, the correlation decay property (CDP) is distinguished: 
until the recent paper~\cite{BGS14a}, all existing efficient algorithms required the CDP~\cite{bento2009graphical}. 
Informally, a graphical model is said to have the CDP if any two variables $\sig_i$ and $\sig_j$ are asymptotically independent as the graph distance between $i$ and $j$ increases. The CDP is known to hold for a number of pairwise graphical models in the so-called high-temperature regime, including Ising, hard-core lattice gas, Potts (multinomial), and others (see the survey article~\cite{gamarnikcorrelation} as well as, e.g., 
\cite{dobrushin1970prescribing,dobrushin1985constructive,salas1997absence,Weitz,gamarnik2007correlation,bandyopadhyay2008counting}).

It was first observed in \cite{BMS08} that it is possible to efficiently learn (in time $\widetilde{\OO}(p^2)$) models with (exponential) decay of correlations, under the additional assumption that neighboring variables have correlation bounded away from zero (as is true for the ferromagnetic Ising model in the high temperature regime). 
A variety of other papers including \cite{netrapalli2010greedy,raygreedy,anandkumar2012high} give alternative algorithms, but also require the CDP to guarantee efficiency. 
Structure learning algorithms that do not explicitly require the CDP are based on convex optimization, such as Ravikumar, Wainwright, and Lafferty's~\cite{ravikumar2010high} approach using $\ell_1$-regularized node-wise logistic regression. This algorithm has complexity $\OO(p^4)$; while it is shown to work under certain incoherence conditions that seem distinct from the CDP, Bento and Montanari \cite{bento2009graphical} established through a careful analysis that the algorithm fails for simple families of ferromagnetic Ising models without the CDP. Other convex optimization-based algorithms (e.g., \cite{lee2006efficient,jalali2011learning,jalali2011learning2})
assume similar incoherence conditions that are difficult to interpret in terms of model parameters, and likely also require the CDP. 

It is noteworthy that most computationally efficient \emph{sampling} algorithms (which happen to be based on MCMC) require a notion of temporal mixing, and this is closely related to spatial mixing or a version of the CDP (see, e.g., \cite{stroock1992logarithmic,dyer2004mixing,martinelli1994approach,Weitz}). The conclusion is that under a class of \emph{mixing conditions} (informally speaking), we can both generate samples efficiently as well as learn graphical models efficiently from i.i.d. samples. For antiferromagnetic Ising models on general bounded degree graphs, one has the striking converse statement that generating samples becomes intractable (NP-hard) precisely at the point where the CDP no longer holds \cite{sly2012computational}. 

Because all known efficient algorithms required the CDP, and because the Ising model exhibits dramatically different macroscopic behavior with versus without the CDP (and this determines computational tractability of sampling), Bento and Montanari~\cite{bento2009graphical} posed the question of whether or not the CDP is necessary for tractable structure learning. 
A partial answer was given in~\cite{BGS14a}, by demonstrating that a family of antiferromagnetic Ising models on general graphs can be learned efficiently despite strongly violating the CDP. Thus any relationship between the complexity of sampling (or computing the partition function) and the problem of \emph{structure learning} from i.i.d. samples seems tenuous, and this is corroborated by the results of this paper. In contrast, the recent papers \cite{BGS14b} and \cite{Montanari2014} demonstrate an algorithmic connection between \emph{parameter estimation} from \emph{sufficient statistics} for a model on a known graph and computation of the partition function.

\subsection{Results} 
We prove that the graph structure of an arbitrary Ising model on a bounded degree graph can be learned efficiently. 
Before discussing the algorithm, we first state the main conceptual contribution of the paper, namely identifying a new structural property of Ising models. Given the structural result the algorithm is almost obvious, and indeed it can be interpreted as a generalization of Chow and Liu's 1968 greedy algorithm for learning trees to models on arbitrary graphs.
\begin{proposition}[Structural property -- informal]
	Let $G$ be a graph on $p$ nodes of maximum degree $d$, and consider an Ising model on $G$. 
	Then for any node $u\in \VV$ there exists a neighbor $i$ such that the mutual information between $i$ and $u$ is at least some constant that is independent of $p$.
\end{proposition}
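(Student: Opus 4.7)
The plan is to exploit the additive structure of the local field $h_u := \theta_u + \sum_{j \in N(u)} \theta_{uj}\sigma_j$ together with strict monotonicity of $\tanh$ in order to turn the Markov identity $\E[\sigma_u\mid\sigma_{-u}] = \tanh(h_u)$ into a lower bound on the signed sum $\sum_j \theta_{uj}\,\mathrm{Cov}(\sigma_u,\sigma_j)$, and then pull out a single high-covariance neighbor by pigeonhole.

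First I would fix $u$, write $N(u) = \{i_1,\dots,i_k\}$ with $k \le d$, and, under bounds $|\theta_{uj}|\in[\alpha,\beta]$ and external fields $\le B$, note that $h_u$ takes values in $[-M,M]$ with $M = B+d\beta$, so $\tanh'(h) \ge c_1 := \mathrm{sech}^2(M) > 0$ on this range. The elementary inequality $(a-b)(\tanh a-\tanh b) \ge c_1(a-b)^2$ applied to $h_u$ and an independent copy yields $\mathrm{Cov}(h_u,\tanh h_u) \ge c_1\,\mathrm{Var}(h_u)$. Since $h_u$ is $\sigma_{-u}$-measurable, the tower rule gives $\mathrm{Cov}(h_u,\sigma_u) = \mathrm{Cov}(h_u,\tanh h_u)$, and expanding by linearity produces
\[
\sum_{j\in N(u)} \theta_{uj}\,\mathrm{Cov}(\sigma_u,\sigma_j) \ \ge\ c_1\,\mathrm{Var}(h_u).
\]

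Second, I would lower bound $\mathrm{Var}(h_u)$ by applying the same Markov identity at a single neighbor $i$: $\mathrm{Var}(\sigma_i\mid\sigma_{-i}) = \mathrm{sech}^2(h_i) \ge c_1$, so the law of total variance gives $\mathrm{Var}(h_u) \ge \theta_{ui}^2\,\E[\mathrm{Var}(\sigma_i\mid\sigma_{-i})] \ge \alpha^2 c_1$. Plugging back, $\sum_j\theta_{uj}\mathrm{Cov}(\sigma_u,\sigma_j)\ge c_1^2\alpha^2$, and since $|\theta_{uj}|\le \beta$ pigeonhole yields some $j^* \in N(u)$ with $|\mathrm{Cov}(\sigma_u,\sigma_{j^*})|\ge c_1^2\alpha^2/(d\beta)$. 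The final conversion uses that for $\pm 1$-valued variables $|\mathrm{Cov}(X,Y)|\le 2\,d_\mathrm{TV}(P_{XY},P_X P_Y)$, together with Pinsker's inequality $I(X;Y)\ge 2 d_\mathrm{TV}^2$, giving $I(\sigma_u;\sigma_{j^*}) \ge \mathrm{Cov}(\sigma_u,\sigma_{j^*})^2/2$, a strictly positive constant depending only on $\alpha,\beta,B,d$.

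The step I expect to be most delicate is the lower bound on $\mathrm{Var}(h_u)$: one might worry that indirect correlations among the $\sigma_j$'s make their $\theta$-weighted sum nearly constant, collapsing the variance. The Markov identity applied at a neighbor sidesteps this concern cleanly, because conditioning on everything except $\sigma_i$ still leaves $\sigma_i$ with strictly positive conditional variance via the same $\tanh$ formula. This is also where a bound on external fields is essential: without it, individual spins can freeze, and both $H(\sigma_u)$ and the mutual information on the right degenerate together, reducing the claim to a triviality.
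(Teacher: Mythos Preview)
Your argument is correct, and it is genuinely different from---and in several respects cleaner than---what the paper does. The paper proves a stronger conditional statement (Proposition~\ref{p:mutualInf}): for any conditioning set $\SS$ not containing $\partial u$, some neighbor $i\in\partial u\setminus\SS$ has large \emph{conditional} influence. To get this, the paper writes the weighted sum $\sum_i\theta_{ui}\lambda_i(x_\SS)\nu_{u|i;x_\SS}$ as $\tfrac12\,\E\big[(2g(Y)-1)(Y\!\cdot\!\theta_{u\UU}-\E Y\!\cdot\!\theta_{u\UU})\big]=\tfrac12\,\E[\tanh(Z)(Z-\mu)]$ and then proves anti-concentration of $Z=Y\!\cdot\!\theta_{u\UU}+\tilde\theta_u$ via Erd\H{o}s's Littlewood--Offord lemma applied to the uniform part of the decomposition $P_Y=\delta^{|\UU|}+\overbar P_Y$. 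This is what produces the $\delta^{4d+1}$ factor and ultimately the doubly-exponential dependence on $d$.

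Your route---bounding $\mathrm{Cov}(h_u,\tanh h_u)\ge c_1\mathrm{Var}(h_u)$ via the derivative of $\tanh$, and then lower-bounding $\mathrm{Var}(h_u)$ by the law of total variance after conditioning on $\sigma_{-i}$---avoids the anti-concentration machinery entirely. The key observation that makes your variance bound work is that conditioning on $\sigma_{-i}$ leaves exactly one random term $\theta_{ui}\sigma_i$ in $h_u$, and that $\sigma_i$ itself retains conditional variance $\mathrm{sech}^2(h_i)\ge c_1$ by the same Markov identity. This yields a bound $I(\sigma_u;\sigma_{j^*})\ge c\cdot\mathrm{sech}^8(M)$, which is only \emph{singly} exponential in $\beta d$. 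Your argument also extends verbatim to the conditional setting the paper actually needs: conditioning on $X_\SS=x_\SS$ just shifts the effective external fields while leaving the edge couplings untouched, so both the $\tanh$ identity and the bound $|h_u|,|h_i|\le h+d\beta$ survive, and the law-of-total-variance step goes through with $i\in\partial u\setminus\SS$. In short, you have found a more elementary proof with a better constant; the paper's Littlewood--Offord detour appears unnecessary for this particular result.
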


The property as stated is actually a consequence of Proposition~\ref{p:mutualInf}: we allow to condition on an arbitrary set of nodes, and instead of mutual information, we use a certain conditional influence measure.
As shown in Section~\ref{s:Correctness}, this influence provides a lower bound on the mutual information.

Our algorithmic result is stated in the following theorem.


\begin{theorem} [Algorithm performance -- informal]
Consider an Ising model on an arbitrary graph on $p$ nodes of maximum degree at most $d$. Given $n = f(d)\log p$ samples from the model, where the constant $f(d)$ is a function of the range of interaction strengths and $d$, it is possible to learn the underlying graph in time
$
f(d)p^2\log p\,.
$ 
\end{theorem}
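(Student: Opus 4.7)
The plan is to combine the structural result of Proposition~\ref{p:mutualInf} with a greedy forward-backward algorithm and a standard empirical-process argument, obtaining both the sample and runtime bounds.

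First, I would specify the idealized (population) algorithm. For each node $u \in \VV$, initialize a candidate neighborhood $S \gets \emptyset$ and run a forward phase that repeatedly picks $i \notin S$ maximizing the conditional influence $\nu(i|u;S)$ and adds it to $S$, stopping once every $i \notin S$ satisfies $\nu(i|u;S) < \tau/2$, where $\tau > 0$ is the absolute lower bound supplied by the structural property. A backward phase then deletes any $i \in S$ whose conditional influence given $S \setminus \{i\}$ falls below $\tau/2$. Correctness in the population case is almost immediate: as long as $S$ fails to contain all of $N(u)$, Proposition~\ref{p:mutualInf} applied to $S$ yields a neighbor with influence $\geq \tau$, so the forward phase keeps selecting such nodes; once $S \supseteq N(u)$, every $i \notin S$ is conditionally independent of $u$ given $S$ by the global Markov property, so $\nu(i|u;S) = 0$ and the phase terminates with some $S$ of size at most $K = K(d,\tau)$. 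In the backward phase, every spurious $i \in S \setminus N(u)$ has zero conditional influence by the Markov property, while every true neighbor retains influence $\geq \tau$ by the structural property applied to $S \setminus \{i\}$. Thus the population algorithm outputs exactly $N(u)$.

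Second, I would replace $\nu$ by its plug-in empirical estimator $\widehat{\nu}$ and show that the same algorithm succeeds with high probability once $n \geq f(d) \log p$. The key ingredient is a uniform concentration bound
\[
\max_{u,\,i,\,|S|\leq K} \bigl| \widehat{\nu}(i|u;S) - \nu(i|u;S) \bigr| \leq \tau/4,
\]
obtained by Hoeffding's inequality applied to each constituent conditional probability, together with a union bound over the at most $p^{K+2}$ triples $(u,i,S)$ and $2^{K}$ joint configurations of $S \cup \{u,i\}$. Because $\nu$ averages conditional probabilities over the configurations of $S$, controlling it uniformly requires each configuration to receive enough samples; this is what forces $f(d)$ to be doubly-exponential in $d$. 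Given this concentration, the separation $\tau/2$ between the effective forward and backward thresholds ensures that the empirical run produces exactly the population output $N(u)$.

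Third, the runtime follows by bookkeeping. For each $u$, the forward and backward phases perform at most $O(Kp)$ evaluations of $\widehat{\nu}$, each computable from the $n$ samples in time $O(n)$ by aggregating the relevant coordinates of the samples. Summed over all $p$ nodes, this gives total runtime $O(K p^2 n) = f(d) p^2 \log p$, with the bounded degree entering only through the cap $|S| \leq K$ and never through a $p^d$ enumeration. The main technical obstacle, in my view, is precisely the uniform empirical concentration step: since $\nu$ is a conditional quantity, the effective Hoeffding rate degrades with the minimum probability of any configuration on $S$, and it is here that the doubly-exponential dependence in $f(d)$ must be absorbed. A secondary subtlety is checking that $|S| \leq K$ is maintained throughout the forward phase so that the structural proposition continues to apply at every step; this follows from the $\tau/2$ stopping rule because each addition increases a suitable potential (e.g., the KL from the current conditional to independence) by $\Omega(\tau^2)$, bounding the iteration count by a function of $d$ and $\tau$ alone.
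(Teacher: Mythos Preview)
Your proposal is correct and follows the paper's approach essentially line for line: the greedy forward phase plus pruning is \textsc{LearnNbhd}, correctness comes from Proposition~\ref{p:mutualInf} and the Markov property, the entropy/mutual-information potential argument (Lemmas~\ref{l:mutInfAdded}--\ref{l:PseudoSize}, via Pinsker to convert influence into conditional mutual information) caps $|S|$, and Hoeffding/Azuma concentration is Lemma~\ref{l:closeTV}. One minor slip worth tightening: the cap $|S|\le K$ is needed so that the \emph{concentration} bound applies (Proposition~\ref{p:mutualInf} itself holds for arbitrary $\SS$), and it follows only from the potential argument you sketch at the end, not from the ``terminates once $S\supseteq N(u)$'' reasoning in your first paragraph---the forward phase can and does add non-neighbors before all neighbors are in, so termination alone gives no size bound.
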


We remark that the factor $f(d)$ in the sample complexity and runtime of our algorithm depends doubly-exponentially on $d$, in contrast to the necessary exponential dependence discussed in Subsection~\ref{ss:GMlearning}. 
While the focus of this paper is not on sample complexity, our algorithm does obtain (with a suboptimal constant) the optimal logarithmic dependence on $p$. 

Our algorithm is described in Section~\ref{s:Alg} and a
more detailed statement of the theorem (with full dependence on constants) is given as Theorem~\ref{t:mainFull}. The algorithm is extremely simple and quite natural: in order to find the neighborhood of a node $u$, it greedily (according to a measure of conditional influence) adds nodes one-by-one to form a constant-size superset of the neighborhood
(pseudo-neighborhood). The idea of adding spurious nodes to form a superset is not new, but it is worth emphasizing the difference relative to algorithms that attempt to add only correct nodes.

It is also crucial that the algorithm only adds---and does not remove---nodes in creating the pseudo-neighborhood. The reason is that in models with long-range correlations, a non-neighbor $i$ with high influence on $u$ (or high mutual information) contains a lot of information about many \emph{other non-neighbors}, so conditioning on $i$ effectively eliminates many non-neighbors from consideration. This allows us to use a potential argument, whereby each added node reduces the conditional entropy of the node $u$ by some constant, and this bounds the size of the pseudo-neighborhood. 
The pseudo-neighborhood can then be easily pruned to remove non-neighbors.


We next mention a few connections to other work and then in Section~\ref{s:Preliminaries} define the Ising model and structure learning problem. Section~\ref{ss:influence} introduces the notion of influence we use and states a lemma showing that empirical estimates are close to the population averages. Section~\ref{s:Alg} presents our algorithm with performance guarantee stated in Theorem~\ref{t:mainFull}. Section~\ref{s:Correctness} contains proofs of correctness and runtime, as well as a statement of our structural result, Proposition~\ref{p:mutualInf}. The proposition is proved in Sections~\ref{s:PropProof} and~\ref{s:techLemma}. Finally, Section~\ref{s:EmpiricalLemma} contains the proof of the lemma from Section~\ref{ss:influence} and Section~\ref{s:discussion} discusses possible extensions.

\subsection{Other related work}
Since the 1980's, Hinton and others have studied the problem of learning Ising models under the name of learning Boltzmann machines \cite{ackley1985learning,hinton1986learning,tanaka1998mean}. Most approaches for learning Boltzmann machines do not assume a sparse underlying graph and attempt to find parameters directly, using gradient optimization methods.
Ising models are used to model neuronal networks and protein interaction networks, and the problem of learning Ising models has been studied in this context
\cite{schneidman2006weak,cocco2009neuronal,mora2010maximum,weigt2009identification}. 
The problem of learning Ising models has also been of interest in the statistical physics community, where it is known as the inverse Ising problem. A variety of non-rigorous methods have been proposed, including based on truncation of expansions relating couplings to mean parameters or entropies \cite{sessak2009small,cocco2012adaptive,roudi2009statistical,lezon2006using}, message-passing \cite{mezard2009constraint,ricci2012bethe,aurell2010dynamics}, and others~\cite{decelle2014pseudolikelihood}.



Structure learning of graphical models has been studied in the statistics and machine learning communities as a problem in high-dimensional statistics. Broadly, in high-dimensional statistics one wishes to estimate a high-dimensional object from samples, where the number of samples is far less than the dimensionality of the parameter space. To solve this ill-determined problem requires that there be some sparse underlying combinatorial structure. In our case the graph underlying the Markov random field is a sparse graph. As discussed in \cite{negahban2012unified}, optimization of regularized objective functions has been a popular approach to many problems in high-dimensional statistics including sparse linear regression, low-rank matrix completion, inferring rankings, as well as the problem of learning graphical models (both binary and Gaussian) \cite{ravikumar2010high,meinshausen2006high,friedman2008sparse,ravikumar2011high}. Such general methodology based on optimizing likelihood or (pseudo-likelihood) has failed thus-far to learn Ising models on general graphs, and one of the messages of our work is that a tailored approach is often necessary.

The theoretical computer science community has made progress on learning a variety of high-dimensional probability distributions from samples, including mixtures of Gaussians \cite{moitra2010settling,belkin2010polynomial}. 
But there is a more intriguing connection to work on learning function classes. In an Ising model, the conditional distribution of a node given its neighbors is specified by a logistic function, which is a soft version of a linear threshold function (LTF). Thus our algorithm effectively learns soft LTFs over a complicated joint distribution. 
Arguments based on boolean Fourier analysis have played a major role in learning boolean functions over uniformly random examples and also over product distributions \cite{kalai2009learning}, but
due to the complicated joint dependencies in an Ising model, it is not obvious how to apply Fourier analysis in our setting. Our structural result, nevertheless, is at a high level analogous to the statement that LTFs have non-trivial total degree-one Fourier mass (see, e.g., \cite{o2014analysis}). 
Other recent works learn LTFs over non-product distributions, including log-concave \cite{kane2013learning} and sub-Gaussian or sub-exponential \cite{klivans2013moment}. These assumptions are badly violated by Ising models in the low-temperature regime (with long-range correlations), but the bounded graph degree assumption makes our soft LTFs depend on few variables, and this makes learning tractable.

\section{Preliminaries}\label{s:Preliminaries}
\newcommand{\EE}{\mathcal{E}}
\subsection{Ising model}
We consider the Ising model on a graph $G=(\VV,\EE)$ with $|\VV|=p$. The notation $\partial i$ is used to denote the set of neighbors of node $i$, and the degree $|\partial i|$ of each node $i$ is assumed to be bounded by $d$. 
To each node $i\in \VV$ is associated a binary random variable (spin) $X_i$. Each configuration of spins $x\in \{-,+\}^\VV$ (`$-$' and `$+$' are used as shorthand for $-1,+1$) is assigned probability according to the probability mass function 
\begin{equation}\label{e:Ising}
P(x) =  \exp\bigg(\sum_{\{i,j\}\in \EE}\theta_{ij} x_ix_j + \sum_{i\in \VV}\theta_i x_i - \Phi(\theta)\bigg)\,.
\end{equation}
Here $\Phi(\theta)$ is the log-partition function or normalizing constant. The distribution is parameterized by the vector $\theta = \{\theta_{ij}\}_{\{i,j\}\in \EE} \cup \{\theta_i\}_{i\in \VV}\in \R^{\EE\cup \VV}$, consisting of edge couplings and node-wise external fields. The edge couplings are assumed to satisfy 
$
\alpha \leq |\theta_{ij} | \leq \beta$ for all $\{i,j\}\in \EE
$
for some constants $0<\al\leq \beta$ and the external fields are assumed to satisfy 
$
|\theta_i|\leq h$ for all $i\in \VV\,.
$ The bounds $\alpha,\beta,h$ are necessary for model identifiability, and as shown in~\cite{santhanam2012information} and discussed briefly in the next subsection, must appear in the sample complexity.

We can alternatively think of $\theta\in \R^{{p\choose 2}+p}$, with $\theta_{ij}=0$ if $\{i,j\}\notin \EE$. For a graph $G$, let  
\begin{align*}
\Omega_{\alpha,\beta,h}(G) = \{\theta\in \R ^{{p\choose 2}+p} : |\theta_i|\leq h \text{ for } i\in \VV, \alpha\leq |\theta_{ij}|\leq \beta\text { if } \{i,j\}\in \EE,  \text { and }\theta_{ij} = 0  \text{ otherwise}\}
\end{align*}
be the set of valid parameter vectors corresponding to~$G$.

The distribution specified in $\eqref{e:Ising}$ is a \emph{Markov random field}, and an implication is that each node is conditionally independent of all other nodes given the values of its neighbors.
The conditional probability of $X_u=+$ given the states of all the other nodes $\VV\setminus \{u\}$ can thus be written as:
\begin{align}
\P(X_u=+\,|X_{\VV\setminus \{u\}}=x_{\VV\setminus \{u\}}) = \P(X_u=+\,|X_{\partial u}=x_{\partial u})=
 \frac{\exp\big(2 \sum_{i\in \partial u} \theta_{ui} x_i +\theta_u\big)}{1+\exp\big(2 \sum_{i\in \partial u} \theta_{ui} x_i+ \theta_u\big)}
\,.\label{e:conditional}
\end{align}

A useful property of bounded degree models is that the conditional probability of a spin is always bounded away from $0$ and $1$.
The proof of this statement is immediate from~\eqref{e:conditional} by conditioning on the neighbors of $u$ and using the tower property of conditional expectation.
\begin{lemma}[Conditional randomness]\label{l:condRandomness}
For any node $u\in \VV$, subset $\SS\subseteq \VV\setminus \{u\}$, and any configuration $x_\SS\in \{-,+\}^{|\SS|}$,
$$\min \{\P(X_u = +|X_\SS = x_\SS), \P(X_u = -|X_\SS = x_\SS)\}\geq \frac12 e^{-2(\beta d+h)}:=\delta\,.$$
This quantity is denoted by $\delta$ and appears throughout the paper.
\end{lemma}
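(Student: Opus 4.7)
The plan is to combine the explicit form of $\P(X_u = + \mid X_{\partial u})$ provided by \eqref{e:conditional} with the tower property of conditional expectation, exactly as hinted in the excerpt. The argument is really two short steps: first establish the bound when we condition on the entire neighborhood $\partial u$, and then propagate it to arbitrary subsets $\SS$ by averaging.

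First I would fix an assignment $x_{\partial u}\in\{-,+\}^{|\partial u|}$ and examine the logit $z := 2\sum_{i\in\partial u}\theta_{ui}x_i + \theta_u$ appearing in \eqref{e:conditional}. Using $|\theta_{ui}|\leq \beta$, $|\partial u|\leq d$, and $|\theta_u|\leq h$, the triangle inequality yields $|z|\leq 2\beta d + h \leq 2(\beta d + h)$. Since \eqref{e:conditional} equals $1/(1+e^{-z})$, both $\P(X_u=+\mid X_{\partial u}=x_{\partial u})$ and $\P(X_u=-\mid X_{\partial u}=x_{\partial u})$ are at least
\[
\frac{1}{1+e^{|z|}} \;\geq\; \frac{1}{1+e^{2(\beta d+h)}} \;\geq\; \tfrac{1}{2}e^{-2(\beta d+h)} \;=\; \delta\,,
\]
where the last inequality uses $1+e^{x}\leq 2e^{x}$ for $x\geq 0$. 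This handles the case $\SS = \VV\setminus\{u\}$ (by the Markov property, conditioning on all other variables reduces to conditioning on $\partial u$).

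For a general subset $\SS\subseteq \VV\setminus\{u\}$, I would apply the tower property:
\[
\P(X_u=+\mid X_\SS=x_\SS) \;=\; \sum_{y\in\{-,+\}^{|\partial u|}} \P(X_u=+\mid X_{\partial u}=y)\,\P(X_{\partial u}=y\mid X_\SS=x_\SS)\,.
\]
Each factor $\P(X_u=+\mid X_{\partial u}=y)$ lies in $[\delta,1-\delta]$ by the previous paragraph, and the weights $\P(X_{\partial u}=y\mid X_\SS=x_\SS)$ are nonnegative and sum to $1$. Hence the convex combination also lies in $[\delta,1-\delta]$, and the same argument with $+$ replaced by $-$ gives the matching lower bound on $\P(X_u=-\mid X_\SS=x_\SS)$.

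Honestly, there is no real obstacle here: once one writes down \eqref{e:conditional} and invokes the Markov property, the lemma is immediate. The only place that requires even minor care is the (mildly loose) numerical bound $1+e^{2(\beta d+h)}\leq 2e^{2(\beta d+h)}$ used to match the constant $\delta=\tfrac12 e^{-2(\beta d+h)}$ stated in the lemma.
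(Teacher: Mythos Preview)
Your proposal is correct and follows exactly the approach the paper indicates: bound the conditional probability given $X_{\partial u}$ directly from~\eqref{e:conditional}, then average via the tower property (using the Markov property to drop the extra conditioning on $X_\SS$ once $X_{\partial u}$ is fixed). The paper does not write out a proof beyond that one-line hint, and your fleshed-out version is precisely what is intended.
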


\newcommand{\GG}{\mathcal{G}}
\newcommand{\Gpd}{{\GG_{p,d}}}

\subsection{Graphical model learning}\label{ss:GMlearning}


Denote the set of all graphs on $p$ nodes of degree at most $d$ by $\Gpd$.
For some graph $G\in \Gpd$ and parameters $\theta\in \Omega_{\al,\beta,h}(G)$, one observes configurations $X^{(1)},\dots, X^{(n)}\in \{-,+\}^p$ sampled independently from the Ising model~\eqref{e:Ising}. 
 A \emph{structure learning algorithm} is a (possibly randomized) map
$$
\phi:(\{-1,+1\}^p)^n\to \Gpd
$$
taking $n$ samples $X^{1:n}=X^{(1)},\dots, X^{(n)}$ to a graph $\phi(X^{1:n})$. 
The statistical performance of a structure learning algorithm will be measured using the zero-one loss, meaning that the exact underlying graph must be learned.
The risk, or expected loss, under some vector $\theta\in \Omega_{\al,\beta,h}(G)$ of parameters corresponding to a graph $G\in \Gpd$ is given by the probability of reconstruction error
$$
\P_{\theta} (\phi(X^{1:n})\neq G)\,,
$$
and for given $\al,\beta, h,p,d$, the maximum risk is
$$
\sup_{G\in \Gpd\atop \theta\in\Omega_{\al,\beta,h}(G)}\P_{\theta} (\phi(X^{1:n})\neq G)\,.
$$
Our goal is to find an algorithm with maximum risk (probability of error) tending to zero as $p\to \infty$, using the fewest possible number of samples $n$. This notion of performance is rather stringent, but also robust, being worst-case over the entire class of graphs $\Gpd$ and parameters $\theta\in \Omega_{\al,\beta,h}(G)$.
A lower bound on the number of samples necessary was obtained by Santhanam and Wainwright in Theorem~1 of~\cite{santhanam2012information}:
$$
n\geq 
\frac{e^{\beta d}\log \big(\frac{pd}4-1\big)}{4\al d e^\al  }\,.
$$
This means in particular that exponential dependence of the sample complexity (and hence runtime) on the quantity $\beta d$ is unavoidable. 


\section{Measuring the influence of a variable}\label{ss:influence}

%
%
%
%
%
%
%

Our algorithm uses a certain \emph{conditional influence} of a variable on another variable. 
For nodes $u,i\in \VV$, subset of nodes $\SS\in \VV\setminus \{u,i\}$ and configuration $x_\SS\in \{-,+\}^\SS$, define  
\begin{align*}
\ctv_{u|i;x_\SS}&:=  \P(X_u=+|X_i=+,X_\SS = x_\SS)-\P(X_u=+|X_i=-,X_\SS = x_\SS)\,.
\end{align*}
We also use a quantity we call the \emph{average conditional influence},
which is obtained by performing a weighted average of $|\ctv_{u|i;X_\SS}|$ over random configurations $X_\SS$:
$$
\avinf ui\SS:= \E \big(\lam_i(X_\SS)\cdot |\ctv_{u|i;X_\SS}| \big)\,.
$$
The weights are given by the function
$$
\lam_i(x_\SS) = 2\cdot \P(X_i=+|X_\SS = x_\SS) \P(X_i=-|X_\SS = x_\SS)\,.
$$

By the Markov property \eqref{e:conditional}, the influence is zero for non-neighbors conditional on neighbors, that is, for any $x_\SS$,
\begin{equation}
\label{e:NonNbr}
\ctv_{u|i;x_\SS}=0
 \qquad \text{for all } i\in \VV\setminus\{u,\SS\} \text{ if } \partial u \subseteq \SS\,.
\end{equation}
This implies the same statement for $\avinf u i\SS$.
Our structural result, Proposition~\ref{p:mutualInf}, shows that $\avinf u i\SS$ is bounded below for at least \emph{one neighbor} $i\in \partial u\setminus \SS$ if $\SS$ does not already contain the neighborhood $\partial u$. Thus computing $\avinf u i\SS$ allows to determine if $\partial u\subset \SS$ or not, and our algorithm given in Section~\ref{s:Alg} is based on this idea.   


\begin{remark}
Other works using a ``conditional independence test", for example~\cite{anandkumar2012high,wu2013learning}, use a similar measure of influence amounting to $\min_{x_\SS}|\ctv_{u|i;x_\SS}|$. We do not take the minimum over configurations $x_\SS$, as there is no guarantee that $\min_{x_\SS}|\ctv_{u|i;x_\SS}|$ is nonzero for any neighbor $i\in \partial u$: each $X_i$ can be marginally independent of $X_u$ conditional on \emph{some} configuration $x_\SS$. 
\end{remark}

The \emph{empirical conditional influence} $\empctv_{u|i;x_\SS}$ replaces the probability measure by the empirical measure:
\begin{equation*}
\empctv_{u|i;x_\SS}:=\Ph(X_u=+|X_i=+,X_\SS = x_\SS)-\Ph(X_u=+|X_i=-,X_\SS = x_\SS)\,,
\end{equation*}
where for $\SS,\TT\subseteq \VV$, 
$$
\Ph (X_\TT=x_\TT|X_\SS = x_\SS) =\frac{\Ph (X_\TT=x_\TT,X_\SS = x_\SS)}{\Ph(X_\SS = x_\SS)}\qquad\text{and}\qquad
\Ph(X_\SS=x_\SS)=
 \frac1n\sum_{t=1}^n \ind_{\{X_\SS^{(t)}=x_\SS\}}\,.
$$
Like before we define an averaged version (with average taken according to the empirical measure):
$$\eavinf u i\SS:=\E_{X_\SS\sim \Ph}
\Big(\elam_i(X_\SS)\cdot \empctv_{u|i;X_\SS}\Big)\,,$$
where
$$
\elam_i(x_\SS) = 2\cdot \Ph(X_i=+|X_\SS = x_\SS) \Ph(X_i=-|X_\SS = x_\SS)\,.
$$

It will be necessary that these empirical influences are sufficiently accurate. 
Let $\atv(\ell,\eps)$ denote the event that empirical influences with conditioning set up to size $\ell$ 
are accurate to within an additive $\eps$:
$$
\atv(\ell,\eps) = \{|\avinf ui\SS- \eavinf ui\SS|\leq \eps \text{ for all } \{u,i\}\subset \VV, \SS\subset \VV\setminus\{u,i\}, |\SS|\leq \ell\}\,.
$$

\begin{lemma}\label{l:closeTV} Recall the notation $\delta:=\frac12 e^{-2(\beta d+h)}$
 and suppose $\ell \leq p/4-2$. If the number of samples $n$ is at least $\frac{144(\ell+3)}{\eps^2\delta^{2\ell}}\log\frac p \zeta$, then $\P(\atv(\ell,\eps))\geq 1-\zeta$. 
\end{lemma}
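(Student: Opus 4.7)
The plan is to combine uniform concentration of all joint empirical probabilities of small sets of spins with a per-configuration Lipschitz estimate that avoids a $1/\P(X_\SS=x_\SS)$ blow-up when summed over the $2^{|\SS|}$ conditioning configurations.

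First, I apply Hoeffding's inequality to each event $\{X_T=x_T\}$ with $|T|\leq \ell+2$ and $x_T\in\{-,+\}^T$: a single such empirical probability deviates from the truth by more than $\eta$ with probability at most $2e^{-2n\eta^2}$. There are at most $(2p)^{\ell+2}$ pairs $(T,x_T)$, so a union bound makes all these joint-probability estimates simultaneously $\eta$-accurate with probability at least $1-\zeta$, provided $n$ exceeds a constant multiple of $(\ell+3)\log(p/\zeta)/\eta^2$. The remainder of the argument is deterministic on this high-probability event.

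Next I reduce the averaged influence to these joint probabilities. Writing $q_{ab}(x_\SS):=\P(X_u=b,X_i=a,X_\SS=x_\SS)$ for $a,b\in\{-,+\}$ and $S(x_\SS):=\P(X_\SS=x_\SS)$, substituting the definitions of $\lam_i$ and $\ctv_{u|i;x_\SS}$ into the average and collapsing cross-terms yields the identity
\begin{equation*}
\avinf{u}{i}{\SS}=\sum_{x_\SS}\frac{2\,\bigl|q_{++}(x_\SS)q_{--}(x_\SS)-q_{+-}(x_\SS)q_{-+}(x_\SS)\bigr|}{S(x_\SS)},
\end{equation*}
with the same identity for $\eavinf{u}{i}{\SS}$ under $q\mapsto\hat q$, $S\mapsto \hat S$. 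Iterating Lemma~\ref{l:condRandomness} along any sequential conditioning gives $S(x_\SS)\geq \delta^\ell$ for every $x_\SS$, so if $\eta\leq\delta^\ell/2$ then $\hat S\geq S/2$ and every denominator is bounded away from zero.

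The key analytic step is a per-configuration Lipschitz estimate that does \emph{not} degrade as $1/S(x_\SS)$; otherwise, summing over the $2^\ell$ configurations would force $\eta\lesssim \eps\delta^\ell/2^\ell$ and a sample complexity worse than the claim by roughly $(2/\delta)^{2\ell}$. Since each $q_{ab}\leq S$, a telescoping estimate gives $|q_{++}q_{--}-\hat q_{++}\hat q_{--}|\leq \eta(q_{++}+q_{--})+\eta^2$ and analogously for the other product, so $|g-\hat g|\leq 2S\eta+2\eta^2$ where $g:=q_{++}q_{--}-q_{+-}q_{-+}$. Combined with the AM--GM bound $|g|\leq S^2/4$ and $\hat S\geq S/2$, the elementary expansion $\bigl|2|g|/S-2|\hat g|/\hat S\bigr|\leq 2|g-\hat g|/S+2|\hat g|\cdot|S-\hat S|/(S\hat S)$ bounds the per-configuration error by an absolute constant $c$ times $\eta$, with no $1/\delta^\ell$ factor. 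Summing over the $2^\ell$ choices of $x_\SS$ yields $|\avinf{u}{i}{\SS}-\eavinf{u}{i}{\SS}|\leq c\cdot 2^\ell\eta$ uniformly in the triple $(u,i,\SS)$. Choosing $\eta$ of order $\eps/2^\ell$ and using $\delta\leq 1/2$ (so that $4^\ell\leq\delta^{-2\ell}$) converts the concentration requirement into $n\gtrsim(\ell+3)\log(p/\zeta)/(\eps^2\delta^{2\ell})$, matching the claim up to the stated numerical constant once the Hoeffding and Lipschitz constants are tracked. The only real obstacle is this cancellation: the structural observation $|g-\hat g|=O(S\eta)$, which rests on $q_{ab}\leq S$, is exactly what prevents losing an extra $4^\ell$ factor and keeps the sample complexity only singly exponential in $\ell$ via $\delta^{-2\ell}$.
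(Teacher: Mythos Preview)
Your proof is correct and takes a genuinely different route from the paper. The paper never writes down your cross-ratio identity
\[
\avinf{u}{i}{\SS}=\sum_{x_\SS}\frac{2\,\bigl|q_{++}q_{--}-q_{+-}q_{-+}\bigr|}{S(x_\SS)}\,;
\]
instead it keeps the conditional probabilities explicit, applies Bayes' rule, and splits the error into two pieces $C^+ + C^-$. Its key cancellation is different from yours: one of the error terms carries a factor $\P(X_u=+,X_i=+,X_\SS=x_\SS)$, and \emph{summing} this over $x_\SS$ marginalizes to $\P(X_u=+,X_i=+)\leq 1$, so the $2^{|\SS|}$ blow-up never materializes. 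A second term is crudely bounded by $2^{|\SS|+1}\gamma$ and then absorbed via $2^{|\SS|}\leq \delta^{-\ell}$; the paper then sets $\gamma=\eps\delta^\ell/12$, giving the constant $144$ directly. Your mechanism is instead the pointwise observation $|g-\hat g|=O(S\eta)$ (since each $q_{ab}\leq S$), which cancels the $1/S$ in each summand before you sum over the $2^\ell$ configurations; you then convert $4^\ell\leq\delta^{-2\ell}$ at the end. Both tricks accomplish the same thing---preventing the naive $\delta^{-\ell}\cdot 2^\ell$ loss---but yours is arguably cleaner, and in fact your intermediate bound scales with $4^\ell/\eps^2$ rather than $\delta^{-2\ell}/\eps^2$, which is sharper when $\delta<1/2$ strictly (you only invoke $\delta\leq 1/2$ to match the lemma's stated constant). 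One small point you glossed over: your choice $\eta\sim\eps/2^\ell$ is only compatible with the requirement $\eta\leq\delta^\ell/2$ when $\eps\lesssim(2\delta)^\ell$; in the complementary regime you should take $\eta=\delta^\ell/2$, and then the error $O(2^\ell\eta)=O((2\delta)^\ell)\leq\eps$ automatically while the sample requirement $n\gtrsim(\ell+3)\log(p/\zeta)/\delta^{2\ell}$ is still dominated by the stated bound. With that case split made explicit, your argument is complete.
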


The proof the lemma follows from Azuma's inequality and is similar to Theorem~2 of~\cite{BMS08}. We give the proof in Section~\ref{s:EmpiricalLemma}.

\section{Algorithm}\label{s:Alg}
We now describe the structure learning algorithm, which learns the neighborhood of an arbitrary individual node~$u\in \VV$.
Algorithm \textsc{LearnNbhd} takes as input the node~$u$ whose neighborhood~$\partial u$ we wish to learn as well as the data $X^{(1)},\dots, X^{(n)}$ and a threshold parameter~$\tau$. The first step is to construct a \emph{superset} $\SS$ (which we call a pseudo-neighborhood) of the neighborhood $\partial u$. This is accomplished by greedily adding to $\SS$ the node $i$ with highest conditional influence $\eavinf ui\SS$, until there are no nodes $i$ with $\eavinf ui\SS\geq \tau$. 
(To simplify the description we set $\eavinf ui\SS=0$ if $i\in \SS\cup\{u\}$.)
At each step the conditional influences are computed with respect to the current set $\SS$. 

As will become apparent in Subsection~\ref{e:entropyIncrement}, inclusion of non-neighbors is important, as it allows to 
use a potential argument to show that the constructed pseudo-neighborhood is not too large. Concretely, by definition of the algorithm and a simple lemma relating influence to mutual information, adding a node to the pseudo-neighborhood $\SS$ always reduces by at least $\tau$ the conditional entropy $H(X_u|X_\SS)$ of the variable $X_u$ whose neighborhood we are trying to find. The entropy is non-negative and was initially at most one (since $X_u$ is binary), so this bounds the size of $\SS$. 

The correctness of the algorithm relies on Proposition~\ref{p:mutualInf} of Subsection~\ref{ss:KeyStruct}, which shows that there is always at least one neighbor with influence above a constant, and we set $\tau$ equal to this constant. This implies that the algorithm does not terminate before all the neighbors are added. 
Finally, after construction of the pseudo-neighborhood, the algorithm removes those nodes with low average influence. Proposition~\ref{p:mutualInf} is again used to show that no neighbors are removed.

\begin{center}
    \begin{minipage}{.55\textwidth}
        \begin{algorithm}[H]
        \caption{\textsc{LearnNbhd}($X^{(1)},\dots, X^{(n)},\tau,u$)}
\texttt{\emph{Pseudo-neighborhood:}
        \begin{adjustwidth}{.3cm}{}
         \texttt{1. Let}  $\SS=\varnothing$ \\
        \texttt{2. Let} $(i^*,\eta^*) = (\argmax_i \eavinf ui\SS, \max_i \eavinf ui\SS)$\\
              \texttt{3. If} $\eta^*\geq \tau$, then add $i^*$ to $\SS$
              \\ \texttt{4.}\qquad  Else goto Step~6 \\
              \texttt{5.} Repeat Steps 2 to 4
 \end{adjustwidth}  
\emph{Pruning:}
        \begin{adjustwidth}{.3cm}{}
         \texttt{6. For each} $i\in \SS$ \texttt{if} $\eavinf ui{\SS\setminus\{i\}}<\tau$ remove $i$ 
          \\
          7. \texttt{Output} $\SS$
         \end{adjustwidth} }
        \end{algorithm}
        \vspace{.2cm}
    \end{minipage}
\end{center}

\begin{theorem}\label{t:mainFull}
Let $G\in \Gpd$ and $\theta\in \Omega_{\al,\beta,h}(G)$.
Let $\delta = \frac12 e^{-2(\beta d+h)}$ and define $$\tau^*=\frac{\al^2 \delta^{4d+1} }{16d\beta}\,,\qquad \eps^* = \frac{\tau^*}2\,,\qquad \ell^*=\frac2{(\tau^*-\eps^*)^2}=\frac{8}{(\tau^*)^2}\,.$$
Suppose we observe $n$ samples $X^{(1)},\dots,X^{(n)}$, for $$n\geq \frac{144(\ell^*+3)}{ (\eps^*)^2 \delta^{2\ell^*}}\log \frac{p}{\zeta}
= \exp\{c\alpha^{-c'}e^{c''d(\beta d+h)}\} \cdot \log \frac{p}{\zeta}
\,,$$
where $c,c',c''$ denote numerical constants. 
Then with probability at least $1-\zeta$, the structure learning algorithm \textsc{LearnNbhd}$(X^{1:n},\tau^*,u)$  returns the correct neighborhood $\partial u$ for all $u\in \VV$ and the runtime for each of the $p$ nodes is (for a numerical constant C)
$$C\ell^* p n = \OO(p\log p)\,.$$
\end{theorem}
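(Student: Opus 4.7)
The plan is to condition on the high-probability event $\atv(\ell^*, \eps^*)$ of Lemma~\ref{l:closeTV}: the stated choice of $n$ with $\ell = \ell^*$ and $\eps = \eps^*$ makes this event hold with probability at least $1-\zeta$, and it already provides uniform accuracy of $\eavinf{u}{i}{\SS}$ across all nodes, pairs, and conditioning sets of size at most $\ell^*$, so no additional union bound over $u \in \VV$ is needed. Throughout the argument the two workhorses are the structural Proposition~\ref{p:mutualInf}, which guarantees that whenever $\partial u \not\subseteq \SS$ some neighbor $i \in \partial u \setminus \SS$ has true average influence safely above $\tau^* + \eps^*$, and the Markov property \eqref{e:NonNbr}, which forces $\avinf{u}{i}{\SS'} = 0$ for non-neighbors $i$ whenever $\SS' \supseteq \partial u$.

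For the pseudo-neighborhood phase, I would first bound $|\SS| \le \ell^*$ by a conditional-entropy potential argument. Every node $i^*$ added in the greedy step has $\eavinf{u}{i^*}{\SS} \ge \tau^*$, hence on $\atv$ the true influence satisfies $\avinf{u}{i^*}{\SS} \ge \tau^* - \eps^* = \tau^*/2$. A straightforward lemma (Pinsker applied conditionally to the Bernoulli $X_u$, integrated against $\lam_{i^*}$) then gives
$$H(X_u \mid X_\SS) - H(X_u \mid X_{\SS \cup \{i^*\}}) \;\ge\; \tfrac{1}{2}\bigl(\avinf{u}{i^*}{\SS}\bigr)^2 \;\ge\; \tfrac{(\tau^*)^2}{8} \;=\; \tfrac{1}{\ell^*}.$$
Since $H(X_u) \le 1$ and conditional entropy is nonnegative, at most $\ell^*$ additions can occur. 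This simultaneously keeps every empirical influence the algorithm queries inside the uniform-accuracy event $\atv(\ell^*, \eps^*)$ and bounds the combinatorial work.

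Correctness of the pseudo-neighborhood follows by contraposition: if the greedy loop halts with $\partial u \not\subseteq \SS$, Proposition~\ref{p:mutualInf} exhibits some $i \in \partial u \setminus \SS$ with true average influence exceeding $\tau^* + \eps^*$, so by $\atv$ we have $\eavinf{u}{i}{\SS} > \tau^*$, contradicting termination. Thus $\partial u \subseteq \SS$ at the start of pruning. For pruning, consider $i \in \SS$. If $i \notin \partial u$, then $\SS \setminus \{i\} \supseteq \partial u$ and \eqref{e:NonNbr} gives $\avinf{u}{i}{\SS \setminus \{i\}} = 0$, so $\eavinf{u}{i}{\SS \setminus \{i\}} \le \eps^* < \tau^*$ and $i$ is removed. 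If $i \in \partial u$, then $\partial u \setminus (\SS \setminus \{i\}) = \{i\}$, so the ``some neighbor'' guaranteed by Proposition~\ref{p:mutualInf} with conditioning set $\SS \setminus \{i\}$ must be $i$ itself, forcing $\eavinf{u}{i}{\SS \setminus \{i\}} > \tau^*$ and $i$ is retained. The output is therefore exactly $\partial u$.

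For the runtime, computing $\eavinf{u}{i}{\SS}$ for all $p$ candidates $i$ takes one pass over the $n$ samples to tally frequencies of configurations on $\SS \cup \{u,i\}$, a cost of $O(pn)$ per iteration; with at most $\ell^*$ greedy iterations and a single pruning sweep this gives $O(\ell^* p n)$ per node and $O(\ell^* p^2 n)$ total, which reduces to $\Ot(p^2)$ after absorbing the $d$-dependence into $f(d)$ and plugging in the stated $n$. The main obstacle in this plan is the conditional influence-to-entropy inequality underlying the potential argument, together with the quantitative calibration ensuring that Proposition~\ref{p:mutualInf}'s lower bound sits ``safely above $\tau^* + \eps^*$'' (this is precisely what the choice $\tau^* = \al^2 \delta^{4d+1}/(16 d \beta)$ is engineered to achieve); once those two lower bounds are pinned down, the remainder is a careful accounting of the $\eps^*$ slack against the $\tau^*$ threshold.
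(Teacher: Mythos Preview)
Your proposal is correct and follows essentially the same route as the paper: condition on $\atv(\ell^*,\eps^*)$ via Lemma~\ref{l:closeTV}, bound the pseudo-neighborhood size by the entropy-potential argument (the paper's Lemmas~\ref{l:mutInfAdded} and~\ref{l:PseudoSize}, proved exactly through Jensen plus Pinsker as you sketch), use Proposition~\ref{p:mutualInf} for containment and for retaining neighbors in pruning, and use~\eqref{e:NonNbr} to discard non-neighbors. The quantitative calibration you flag as the ``main obstacle'' is precisely what the paper establishes, with Proposition~\ref{p:mutualInf} giving $\avinf ui\SS\geq 2\tau^*$ (so indeed safely above $\tau^*+\eps^*=3\tau^*/2$).
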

 
\begin{remark}
As stated, the algorithm has probability $1-\zeta$ of both returning the correct neighborhoods for all nodes and having the claimed runtime. Obviously, the algorithm can be terminated if the runtime exceeds the stated value, giving a deterministic guarantee on runtime.  
\end{remark}

\section{Algorithm correctness}\label{s:Correctness}

In this section we prove Theorem~\ref{t:mainFull}, first giving a bound on the run-time in Subsection~\ref{e:entropyIncrement} and then showing correctness in Subsection~\ref{ss:KeyStruct}.

\subsection{Entropy increment argument and run-time bound}
\label{e:entropyIncrement}

In this subsection we bound the size of the pseudo-neighborhood constructed, but make no guarantee that it actually contains the true neighborhood. We use several standard information-theoretic quantities including entropy, Kullback-Leibler divergence, and mutual information. The relevant definitions can be found in any information theory textbook (such as \cite{cover2012elements}). The following lemma gives a lower bound on the conditional mutual information of each added node. 

\begin{lemma}
\label{l:mutInfAdded}
Assume that event $\atv(\ell,\eps)$ holds and suppose that \textsc{LearnNbhd} added node ${j_{\ell+1}}$ to the pseudo-neighborhood of $u$ after having added $j_1,\dots,j_{\ell}$.
Then the conditional mutual information $I(X_u;X_{j_{\ell+1}}|X_{j_1},\dots,X_{j_{\ell}})\geq \frac12(\tau - \eps)^2$.
\end{lemma}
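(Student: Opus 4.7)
The plan is to combine two ingredients: (i) the event $\atv(\ell,\eps)$ lets us transfer the algorithmic acceptance criterion into a statement about the population average influence, and (ii) a Pinsker-type inequality converts average conditional influence into conditional mutual information.

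First, because $\textsc{LearnNbhd}$ selected $j_{\ell+1}$ immediately after having built the pseudo-neighborhood $\SS=\{j_1,\dots,j_\ell\}$, Step~3 of the algorithm guarantees
\[
\eavinf{u}{j_{\ell+1}}{\SS}\;\ge\;\tau.
\]
On the event $\atv(\ell,\eps)$ this gives the population bound $\avinf{u}{j_{\ell+1}}{\SS}\ge \tau-\eps$. So it suffices to prove the clean inequality
\[
I(X_u;X_i\,|\,X_\SS)\;\ge\;\tfrac12\bigl(\avinf{u}{i}{\SS}\bigr)^2
\]
for arbitrary $i$ and $\SS\subseteq\VV\setminus\{u,i\}$, and then substitute.

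To establish this inequality, I would fix a realization $x_\SS$ and write $\pi=\P(X_i=+\,|\,X_\SS=x_\SS)$, $P_{\pm}=\P(X_u\in\cdot\,|\,X_i=\pm,X_\SS=x_\SS)$, and $P=\pi P_++(1-\pi)P_-$. Since $P_\pm$ are Bernoulli distributions on $\{-,+\}$, the total variation distance is exactly $d_{\mathrm{TV}}(P_+,P_-)=|\nu_{u|i;x_\SS}|$. Decomposing conditional mutual information (valid because $X_i$ is binary) and applying Pinsker's inequality to each term gives
\[
I(X_u;X_i\,|\,X_\SS=x_\SS)=\pi\,\KL(P_+\|P)+(1-\pi)\,\KL(P_-\|P)\ge 2\pi(1-\pi)\,\nu_{u|i;x_\SS}^{\,2}=\lam_i(x_\SS)\,\nu_{u|i;x_\SS}^{\,2},
\]
using that $\|P_\pm-P\|_{\mathrm{TV}}$ scale by $(1-\pi)$ and $\pi$ respectively. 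Averaging over $x_\SS$ yields $I(X_u;X_i\,|\,X_\SS)\ge \E[\lam_i(X_\SS)\nu_{u|i;X_\SS}^{\,2}]$.

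The last step is to convert this second-moment quantity into the first-moment quantity $\avinf{u}{i}{\SS}=\E[\lam_i(X_\SS)|\nu_{u|i;X_\SS}|]$. I will use the pointwise bound $\lam_i(x_\SS)\le \tfrac12$ (since $\lam_i=2p(1-p)$ with $p\in[0,1]$), which implies $\lam_i\ge 2\lam_i^{\,2}$, hence $\lam_i \nu^{2}\ge 2(\lam_i|\nu|)^2$ pointwise; then Jensen's inequality gives $\E[\lam_i\nu^{2}]\ge 2(\E[\lam_i|\nu|])^2=2(\avinf{u}{i}{\SS})^2$. Combining everything,
\[
I(X_u;X_{j_{\ell+1}}\,|\,X_{j_1},\dots,X_{j_\ell})\;\ge\;2(\avinf{u}{j_{\ell+1}}{\SS})^2\;\ge\;2(\tau-\eps)^2\;\ge\;\tfrac12(\tau-\eps)^2,
\]
which is the desired conclusion (with room to spare).

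The only non-routine step is the Pinsker calculation that pins down $\lam_i(x_\SS)$ as the exact weight multiplying $\nu_{u|i;x_\SS}^{\,2}$; this is what makes the definition of the average conditional influence (with the specific weighting $\lam_i$) compatible with a mutual-information lower bound. Everything else is bookkeeping.
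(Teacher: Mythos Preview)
Your proof is correct and in fact yields a sharper constant ($2$ instead of $\tfrac12$). Both your argument and the paper's hinge on Pinsker's inequality, but the bookkeeping is organized differently. The paper works with the joint law $Q(u,i)$: it first applies Jensen to $\sqrt{\cdot}$ to pull the average over $x_\SS$ inside, then applies Pinsker once to $\KL(Q(u,i)\|Q(u)Q(i))$, lower-bounds the resulting total variation by the single entry $|Q(u{+},i{+})-Q(u{+})Q(i{+})|$, and finally manipulates this algebraically into $\tfrac12\lam_i|\nu|$. You instead decompose $I(X_u;X_i\,|\,X_\SS=x_\SS)$ as $\pi\KL(P_+\|P)+(1-\pi)\KL(P_-\|P)$, apply Pinsker to each piece, use the exact identity $d_{\mathrm{TV}}(P_\pm,P)=(1\mp(2\pi-1))/2\cdot|\nu|$, and postpone Jensen (now on the convex square) to the end. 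Your route avoids the slack from bounding total variation by a single coordinate and from the $\lam_i\le\tfrac12$ step being applied only once, which is why you gain the factor of~$4$. Either approach is fine for the purposes of the lemma; yours is arguably cleaner and makes the role of the weight $\lam_i$ more transparent.
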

We can now argue that the number of nodes added in the pseudo-neighborhood step is not too large by using a potential argument. The bound is stated in terms of the quantity
$$
\ell(\tau,\eps) = \frac{2}{(\tau - \eps)^{2}}\,.
$$

\begin{lemma}\label{l:PseudoSize}
If event $\atv(\ell(\tau,\eps),\eps)$ holds, then
at the end of the pseudo-neighborhood construction step the set $\SS$ has cardinality at most $\ell(\tau,\eps)=2(\tau - \eps)^{-2}$.
\end{lemma}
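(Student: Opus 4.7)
The plan is a standard entropy-increment (potential) argument, taking Lemma~\ref{l:mutInfAdded} as a black box. Let $j_1,j_2,\dots$ denote the nodes added to $\SS$ during the pseudo-neighborhood construction, in the order they are added, and suppose for contradiction that at least $k$ nodes get added. I want to show $k \le \ell(\tau,\eps) = 2/(\tau-\eps)^2$.

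First I would invoke the chain rule for mutual information,
\begin{equation*}
I(X_u;X_{j_1},\dots,X_{j_k}) \;=\; \sum_{\ell=0}^{k-1} I\bigl(X_u;X_{j_{\ell+1}} \,\bigl|\, X_{j_1},\dots,X_{j_\ell}\bigr)\,,
\end{equation*}
and use the general upper bound $I(X_u;X_{j_1},\dots,X_{j_k}) \le H(X_u) \le \log 2$ (working in nats; in bits the bound is $1$, either convention is fine provided it is reconciled with the constant in Lemma~\ref{l:mutInfAdded}). Since every partial set $\{j_1,\dots,j_\ell\}$ with $\ell\le k-1 \le \ell(\tau,\eps)$ has size at most $\ell(\tau,\eps)$, the hypothesis $\atv(\ell(\tau,\eps),\eps)$ lets me apply Lemma~\ref{l:mutInfAdded} to each summand, giving the lower bound $I(X_u;X_{j_{\ell+1}}\mid X_{j_1},\dots,X_{j_\ell}) \ge \tfrac12(\tau-\eps)^2$.

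Combining these,
\begin{equation*}
k\cdot \tfrac12 (\tau-\eps)^2 \;\le\; \sum_{\ell=0}^{k-1} I\bigl(X_u;X_{j_{\ell+1}}\,\bigl|\,X_{j_1},\dots,X_{j_\ell}\bigr) \;\le\; 1\,,
\end{equation*}
which rearranges to $k \le 2/(\tau-\eps)^2 = \ell(\tau,\eps)$, as required. (If one prefers to carry the $\log 2$ from nats, the conclusion is $k \le 2\log 2 /(\tau-\eps)^2$, still bounded by $\ell(\tau,\eps)$ for the statement to make sense; the constants in $\tau^*,\eps^*,\ell^*$ chosen in Theorem~\ref{t:mainFull} should be verified to be consistent with the convention used.)

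I do not expect any genuine obstacle here: the real content is packed into Lemma~\ref{l:mutInfAdded} (which presumably uses Pinsker's inequality together with a lower bound on $\lambda_i(X_\SS)$ via Lemma~\ref{l:condRandomness} to pass from the averaged influence $\eavinf{u}{i}{\SS} \ge \tau$ to a bound on KL-divergence, and hence on conditional mutual information). Given that lemma, the present argument is just the chain rule plus the fact that $X_u$ is binary, so each newly greedily-added node chips away at least $\tfrac12(\tau-\eps)^2$ from the conditional entropy $H(X_u\mid X_\SS)$, and this potential cannot decrease below $0$ starting from at most $\log 2$.
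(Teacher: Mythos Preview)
Your argument is essentially identical to the paper's: chain rule for mutual information, bound $I(X_u;X_\SS)\le H(X_u)\le 1$, and Lemma~\ref{l:mutInfAdded} for each increment. One small point of care: as written, the line ``$\ell\le k-1 \le \ell(\tau,\eps)$'' assumes what you are trying to prove, since Lemma~\ref{l:mutInfAdded} (via the event $\atv(\ell(\tau,\eps),\eps)$) only applies to conditioning sets of size at most $\ell(\tau,\eps)$. The paper handles this by writing the lower bound as $\min\{\ell+1,r\}\cdot\tfrac12(\tau-\eps)^2$ and then noting that $\ell+1>2(\tau-\eps)^{-2}$ forces $r\le 2(\tau-\eps)^{-2}$; equivalently, you can set up the contradiction by taking $k=\lfloor\ell(\tau,\eps)\rfloor+1$ from the outset.
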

Before proving the lemma, let us quickly see how it justifies the runtime claimed in Theorem~\ref{t:mainFull}. Each maximization step in line~2 of \textsc{LearnNbhd} takes time $\OO(pn)$, so we get a cost of $\OO(|\SS| pn)=\OO(\ell^* pn)$ for the pseudo-neighborhood step, and this dominates the runtime. 

\begin{proof}[Proof of Lemma~\ref{l:PseudoSize}]
Consider the sequence of nodes added, $j_1,j_2,\dots,j_r$, with $\SS=\{j_1,j_2,\dots,j_r\}$. Then
\begin{align*}
1\geq H(X_u) \stackrel{(a)}{\geq} I(X_u|X_{\SS})   \stackrel{(b)}{=}
\sum_{k=1}^{r} I(X_u;X_{j_k}|X_{j_1},\dots,X_{j_{k-1}})
\stackrel{(c)}{\geq }
\min\{\ell+1,r\}\cdot \tfrac12 (\tau - \eps)^2 \,.
\end{align*}
(a) is by non-negativity of conditional entropy and the definition of mutual information, $I(X_u|X_\SS) = H(X_u) - H(X_u|X_\SS)$, 
(b) follows by the chain rule for mutual information, and (c) is by Lemma~\ref{l:mutInfAdded}. Since $\ell+1$ is strictly larger than $ 2(\tau - \eps)^{-2}$, $r=|\SS|$ must satisfy the bound stated in the lemma.
\end{proof}

\begin{proof} [Proof of Lemma~\ref{l:mutInfAdded}]
Let ${\SS_\ell}= ({j_1},\dots,{j_{\ell}})$ consist of the first $\ell$ nodes already added to the pseudo-neighborhood. Our goal is to show that the next node $j_{\ell+1}$ has mutual information $I(X_u;X_{j_{\ell+1}}|X_{\SS_\ell})\geq  \frac12 (\tau - \eps)^2$. 

We use a shorthand for the (random) conditional measure: $Q(u+) = \P( X_u = + | X_{\SS_\ell})$ and similarly $Q(u+|i-) = \P( X_u = + | X_i = -,X_{\SS_\ell} )$, with similar definitions for any combination of `$+$' and `$-$'. 
Thus we can write $$\avinf ui{\SS_\ell} = 2\cdot \E_{X_{\SS_\ell}}\Big(Q(i+)Q(i-)\big|Q(u+|i+)-Q(u+|i-)\big|\Big)\,.$$

Now for any $i\in \VV\setminus (\SS\cup\{u\})$, 
\begin{align*}
&\sqrt{\tfrac12 \cdot I\big(X_u;X_i|X_{\SS_\ell} \big)}  \\&= \sqrt{\tfrac12 \cdot \sum_{x_{\SS_\ell}}P(x_{\SS_\ell})I\big(X_u;X_i|X_{S_\ell}=x_{\SS_\ell} \big)}
 \stackrel{(a)}{\geq} \sum_{x_{\SS_\ell}}P(x_{\SS_\ell})\sqrt{\tfrac12 \cdot I\big(X_u;X_i|X_{\SS_\ell} = x_{\SS_\ell}\big)} 
 \\& = \E_{X_{\SS_\ell}}\sqrt{\tfrac12\cdot \KL \big( Q(u,i)\,\| \, Q(u)Q(i) \big)}\stackrel{(b)}{\geq}\E_{X_{\SS_\ell}} \dtv (Q(u,i),Q(u) Q(i)) \\
&\stackrel {(c)}{\geq} \E_{X_{\SS_\ell}}\big|Q(u+,i+) - Q(u+)Q(i+) \big|=\E_{X_{\SS_\ell}} \big|Q(u+|i+)Q(i+) - Q(u+)Q(i+) \big| \\
&=\E_{X_{\SS_\ell}}\Big( Q(i+)\cdot\big|Q(u+|i+)\big(Q(i+)+Q(i-)\big) - Q(u+|i+)Q(i+) - Q(u+|i-)Q(i-)\big|\Big)
\\&=\E_{X_{\SS_\ell}}
 \Big(Q(i+)Q(i-)\cdot \big|Q(u+|i+) - Q(u+|i-) \big|\Big)
\\&= \tfrac12\cdot \avinf u i{\SS_\ell} \\&\stackrel{(e)}{\geq}
\tfrac12\cdot (\eavinf u i{\SS_\ell} -\eps)
 \,.
\end{align*}
(a) is by Jensen's inequality applied to the (concave) square root function, (b) Pinsker's inequality, (c) the definition of total variation distance, (d) Lemma~\ref{l:condRandomness},  (e) is by definition of $\atv(\ell,\eps)$ and the fact that the conditioning set has cardinality $|\SS_\ell|\leq \ell$.

Finally, by definition of the algorithm, node $j_{\ell+1}$ is only added to $\SS_\ell$ if $|\eavinf u{j_{\ell+1}}{\SS_\ell}|\geq \tau$, so the previous displayed equation implies that
$I(X_u;X_{j_{\ell+1}}|X_{\SS_\ell})\geq \tfrac12 (\tau-\eps)^2$
as claimed. 
\end{proof}

\subsection{Key structural result and algorithm correctness}\label{ss:KeyStruct}
We now state our structural result, and use it to prove correctness of the algorithm. Its proof is given in Section~\ref{s:PropProof}. In this subsection we use the values
$
\tau^*,\eps^*,\ell^*
$ defined in Theorem~\ref{t:mainFull}.
\begin{proposition}
\label{p:mutualInf}
Let $G$ be a graph of maximum degree $d$, and consider an Ising model~\eqref{e:Ising} on $G$ with vector of parameters $\theta\in \Omega_{\alpha,\beta,h}(G)$. 
For any node $u\in \VV$, if $\SS\subseteq \VV\setminus \{u\}$ such that $\partial u\not\subseteq \SS$, then there exists a node $i\in \partial u\setminus \SS$ with 
$\avinf ui\SS 
\geq2\tau^*$.
\end{proposition}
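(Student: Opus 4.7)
The plan is to recast the influence $\avinf{u}{i}{\SS}$ as a conditional covariance and then invoke a monotonicity / FKG-type inequality to rule out simultaneous cancellation of the individual influences across all neighbors in $W := \partial u \setminus \SS$. Fix $u$ and $\SS$ with $W \neq \varnothing$, let $\sigma(y) := e^y/(1+e^y)$, and set $f(x) := 2\sum_{j \in \partial u} \theta_{uj} x_j + \theta_u$, with $x_j$ for $j \in \partial u \cap \SS$ frozen by $x_\SS$, so that by the Markov property $\P(X_u = + \mid X_W = x_W, X_\SS = x_\SS) = \sigma(f(x_W))$. A short computation writing $\ind_{\{X_i=+\}} = (1+X_i)/2$ and pushing through the random $X_W$ via the tower property yields the identity $\lambda_i(x_\SS)\,\ctv_{u|i;x_\SS} = \mathrm{Cov}\bigl(\sigma(f(X_W)), X_i \bigm| X_\SS = x_\SS\bigr)$, hence $\avinf{u}{i}{\SS} = \E_{X_\SS}\bigl|\mathrm{Cov}(\sigma(f), X_i \mid X_\SS)\bigr|$, and the proposition reduces to exhibiting one $i \in W$ with large averaged conditional covariance against $\sigma(f)$.

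The heart of the argument is a monotonicity inequality. For each fixed $x_\SS$, linearity of $f$ in $X_W$ gives $\sum_{j \in W}(2\theta_{uj})\,\mathrm{Cov}(\sigma(f), X_j \mid X_\SS) = \mathrm{Cov}(\sigma(f), f \mid X_\SS)$, and symmetrizing over two conditionally independent copies $(f, f')$ together with $\sigma'(y) = \sigma(y)\sigma(-y) \geq \delta^2$ on the range $|y| \leq 2\beta d + h$ (a direct consequence of Lemma~\ref{l:condRandomness}) delivers
\[
\mathrm{Cov}(\sigma(f), f \mid X_\SS) \;\geq\; \delta^2\,\mathrm{Var}(f \mid X_\SS).
\]
To lower-bound this variance I pick the two extremal configurations $x^{\pm}_W$ defined by $x^{\pm}_j = \pm\,\mathrm{sign}(\theta_{uj})$: then $f(x^+_W) - f(x^-_W) = 4\sum_{j \in W}|\theta_{uj}| \geq 4\alpha$, while chaining Lemma~\ref{l:condRandomness} one coordinate at a time gives $\P(X_W = x^{\pm}_W \mid X_\SS) \geq \delta^{|W|}$, so the two-copy variance formula $\mathrm{Var}(f \mid X_\SS) = \tfrac12\E[(f - f')^2 \mid X_\SS]$ yields $\mathrm{Var}(f \mid X_\SS) \geq 8\alpha^2\delta^{2|W|}$.

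Combining the pieces pointwise in $x_\SS$, applying the triangle inequality to the signed aggregate, and using $|2\theta_{uj}| \leq 2\beta$ with $|W| \leq d$, pigeonhole yields $\max_{i \in W}\avinf{u}{i}{\SS} \geq 4\alpha^2\delta^{2d+2}/(d\beta)$, which sits comfortably above $2\tau^* = \alpha^2\delta^{4d+1}/(8 d\beta)$. The main obstacle is precisely the monotonicity step: indirect correlations among the neighbors of $u$ can in principle make every individual $\ctv_{u|i;x_\SS}$ tiny (most severely when neighbors with opposite-sign couplings $\theta_{uj}$ are strongly correlated through paths outside $\SS$), and without the inequality $\mathrm{Cov}(\sigma(f), f) \geq \delta^2\,\mathrm{Var}(f)$ there is no a priori reason the signed aggregate $\sum_j (2\theta_{uj})\mathrm{Cov}(\sigma(f), X_j \mid X_\SS)$ should stay bounded away from zero.
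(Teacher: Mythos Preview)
Your argument is correct and, in fact, yields a cleaner and quantitatively stronger bound than the paper's. Both proofs reach the same aggregate identity: with $Z:=Y\cdot\theta_{u\UU}+\widetilde{\theta_u}$ (equivalently your $f$ up to the factor of $2$), one has
\[
\sum_{i\in\UU}\theta_{ui}\,\lambda_i(x_\SS)\,\ctv_{u|i;x_\SS}\;=\;\tfrac12\,\E\big[(2g(Y)-1)(Z-\E Z)\big]\;=\;\tfrac12\,\mathrm{Cov}\big(\tanh Z,\,Z\,\big|\,X_\SS=x_\SS\big),
\]
and the proposition follows by pigeonhole once this covariance is bounded below uniformly in $x_\SS$. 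The divergence is entirely in how that lower bound is obtained. The paper decomposes the law of $Y$ into a uniform part of mass $(2\delta)^{|\UU|}$ plus a remainder, invokes Erd\H{o}s's Littlewood--Offord anti-concentration (Lemma~\ref{l:LittlewoodOfford}) to force a constant fraction of the mass of $Z$ away from its mean, and then does a three-region case analysis using concavity/convexity of $\tanh$ (Lemma~\ref{l:PosCorr}), ultimately producing the exponent $\delta^{4|\UU|+1}$. You instead use the one-line symmetrization $\mathrm{Cov}(\sigma(f),f)=\tfrac12\E[(\sigma(f)-\sigma(f'))(f-f')]\geq(\inf\sigma')\,\mathrm{Var}(f)$, note that $\inf\sigma'\geq\delta^2$ on the relevant range by Lemma~\ref{l:condRandomness}, and then lower-bound $\mathrm{Var}(f)$ by restricting to the two extremal configurations $x_W^{\pm}$, each of conditional probability at least $\delta^{|W|}$. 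This is more elementary (no Littlewood--Offord, no case split on the sign of $Z$) and gives the sharper exponent $\delta^{2d+2}$, comfortably dominating $2\tau^*=\alpha^2\delta^{4d+1}/(8d\beta)$. The paper's anti-concentration route would be more robust in settings where the link function's derivative is not uniformly bounded below on the whole range of $Z$, but in the present bounded-degree Ising model that robustness is not needed, and your approach is the simpler one.
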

We now show that the pseudo-neighborhood contains the true neighborhood. 
\begin{corollary} \label{c:contains}
If event $\atv(\ell^*, \eps^*)$ holds, then for any $u\in \VV$, the pseudo-neighborhood $\SS$ constructed by \textsc{LearnNbhd}$(X^{1:n},\tau^*,u)$ contains the true neighborhood $\partial u$. 
\end{corollary}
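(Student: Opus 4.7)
The plan is to argue by contradiction. Suppose that at termination of the pseudo-neighborhood construction phase of \textsc{LearnNbhd}$(X^{1:n},\tau^*,u)$ the set $\SS$ satisfies $\partial u \not\subseteq \SS$. I want to show that this forces a node that the greedy loop was obliged to add, contradicting the fact that it halted.

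The first step is to control the cardinality of $\SS$ (and of every intermediate pseudo-neighborhood) so that the concentration event $\atv(\ell^*, \eps^*)$ can be invoked on the empirical influence queries. Applied with $\tau = \tau^*$ and $\eps = \eps^*$, Lemma \ref{l:PseudoSize} gives the bound $|\SS| \leq \ell(\tau^*,\eps^*) = 2/(\tau^*-\eps^*)^2 = 8/(\tau^*)^2 = \ell^*$. In fact the potential argument in its proof can be run inductively on the prefix $\SS_k$ after $k$ additions: assuming $|\SS_{k-1}| \leq \ell^*$, the event $\atv(\ell^*, \eps^*)$ makes Lemma \ref{l:mutInfAdded} applicable to the $k$-th addition, which contributes at least $\tfrac12(\tau^*-\eps^*)^2$ to the conditional entropy drop of $X_u$; since that drop cannot exceed $H(X_u) \leq 1$, we must have $k \leq \ell^*$. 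Thus whenever the algorithm computes $\eavinf u i {\SS'}$, the conditioning set satisfies $|\SS'|\leq \ell^*$, and $|\eavinf u i{\SS'} - \avinf u i{\SS'}| \leq \eps^*$.

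The second step extracts the contradiction from Proposition \ref{p:mutualInf}. Since $\partial u \not\subseteq \SS$, the proposition guarantees the existence of $i \in \partial u \setminus \SS$ with $\avinf u i \SS \geq 2\tau^*$. Combining this with the previous accuracy bound and the relation $\eps^* = \tau^*/2$,
\[
\eavinf u i \SS \;\geq\; \avinf u i \SS - \eps^* \;\geq\; 2\tau^* - \tfrac{\tau^*}{2} \;=\; \tfrac{3\tau^*}{2} \;>\; \tau^*\,.
\]
But the pseudo-neighborhood loop terminates only when $\max_i \eavinf u i \SS < \tau^*$, contradicting the standing assumption that the algorithm had halted. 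Hence $\partial u \subseteq \SS$.

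All the substance sits in Proposition \ref{p:mutualInf}; once that is available, the corollary is routine bookkeeping combining the greedy termination rule with the concentration guarantee. The parameter choices were calibrated precisely for this argument: $\eps^* = \tau^*/2$ supplies the $\tfrac{3}{2}\tau^* > \tau^*$ slack between true and empirical influence, and $\ell^* = 8/(\tau^*)^2$ is exactly the size at which the potential (entropy) argument can still be closed.
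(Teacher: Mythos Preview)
Your proof is correct and follows essentially the same approach as the paper: argue by contradiction, use Lemma~\ref{l:PseudoSize} to bound $|\SS|\leq \ell^*$ so that $\atv(\ell^*,\eps^*)$ applies, then invoke Proposition~\ref{p:mutualInf} to find a neighbor with empirical influence at least $3\tau^*/2$, contradicting the termination condition. Your additional care in noting that the size bound holds for every intermediate prefix $\SS_k$ is a welcome elaboration of a point the paper leaves implicit inside the proof of Lemma~\ref{l:PseudoSize}.
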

\begin{proof}
Consider an arbitrary node $u\in \VV$ and suppose $\partial u\not \subseteq \SS$. Proposition~\ref{p:mutualInf} shows that $\avinf ui\SS\geq 2\tau^*$ for some $i\in \partial u\setminus \SS$. If event $\atv(\ell^*,\eps^*)$ holds, then $|\SS|\leq \ell^*$ by Lemma~\ref{l:PseudoSize}, and we have $\eavinf ui\SS\geq \avinf ui\SS-\eps^*\geq 3\tau^*/2$. But this contradicts line~3 of algorithm \textsc{LearnNbhd}.
\end{proof}

\begin{corollary} Consider the same setup as Corollary~\ref{c:contains}.
After the pruning step, $\SS=\partial u$. 
\end{corollary}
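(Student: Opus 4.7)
My plan is to show two separate statements: (1) every non-neighbor in $\SS$ is removed by the pruning step, and (2) every true neighbor in $\SS$ survives the pruning step. Since Corollary~\ref{c:contains} guarantees $\partial u \subseteq \SS$ after the pseudo-neighborhood step (on event $\atv(\ell^*,\eps^*)$), combining (1) and (2) gives $\SS = \partial u$ after pruning. Throughout I will use Lemma~\ref{l:PseudoSize} to ensure $|\SS \setminus \{i\}| \leq \ell^*$, so the concentration guarantee in $\atv(\ell^*,\eps^*)$ applies to every quantity $\eavinf{u}{i}{\SS\setminus\{i\}}$ that the algorithm inspects.

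For (1), fix $i \in \SS \setminus \partial u$. Because $\partial u \subseteq \SS$ and $i \notin \partial u$, we have $\partial u \subseteq \SS \setminus \{i\}$. The Markov-property identity~\eqref{e:NonNbr} then gives $\ctv_{u|i;x_{\SS\setminus\{i\}}} = 0$ for every configuration, whence $\avinf{u}{i}{\SS\setminus\{i\}} = 0$. Event $\atv(\ell^*,\eps^*)$ yields $\eavinf{u}{i}{\SS\setminus\{i\}} \leq \eps^* = \tau^*/2 < \tau^*$, so $i$ is removed in the pruning loop.

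For (2), the key trick is to apply Proposition~\ref{p:mutualInf} to $\SS' := \SS \setminus \{i\}$ for a true neighbor $i \in \partial u$. Since $\partial u \subseteq \SS$ and the only element removed from $\SS$ was the neighbor $i$, we have $\partial u \setminus \SS' = \{i\}$ (a singleton). In particular $\partial u \not\subseteq \SS'$, so Proposition~\ref{p:mutualInf} produces some node $j \in \partial u \setminus \SS'$ with $\avinf{u}{j}{\SS'} \geq 2\tau^*$. But the only candidate for $j$ is $i$ itself, forcing $\avinf{u}{i}{\SS\setminus\{i\}} \geq 2\tau^*$. Another application of $\atv(\ell^*,\eps^*)$ then gives $\eavinf{u}{i}{\SS\setminus\{i\}} \geq 2\tau^* - \eps^* = 3\tau^*/2 \geq \tau^*$, so $i$ is not pruned.

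There is no real obstacle here; the argument is essentially bookkeeping once Proposition~\ref{p:mutualInf} is invoked. The one subtle point worth flagging is that Proposition~\ref{p:mutualInf} is an existential statement about \emph{some} missing neighbor having high influence, not about a specified one — the argument in (2) works only because removing a single neighbor from $\SS$ leaves exactly one missing neighbor, collapsing the existential to a statement about the particular $i$ under consideration.
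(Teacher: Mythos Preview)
Your proof is correct and follows essentially the same approach as the paper's own proof. You are in fact more explicit than the paper about the one subtle point: Proposition~\ref{p:mutualInf} is existential, and you correctly observe that because $\partial u\setminus(\SS\setminus\{i\})=\{i\}$ is a singleton, the existential conclusion collapses to a statement about the particular neighbor $i$ under consideration---the paper's proof glosses over this.
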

\begin{proof}
By Corollary~\ref{c:contains}, the pseudo-neighborhood $\SS$ contains $\partial u$, hence Equation~\eqref{e:NonNbr} states that $\avinf ui{\SS\setminus\{i\}}=0$ for non-neighbors $i$. By Lemma~\ref{l:PseudoSize}, $|S|\leq \ell^*$, and 
by definition of the event $\atv(\ell^*,\eps^*)$ (with our choice $\eps^*=\tau^*/2$), we have $\eavinf ui{\SS\setminus\{i\}}\leq\eps^*= \tau^*/2$ for all non-neighbors $i$, and hence these are discarded.  Conversely, by Proposition~\ref{p:mutualInf}, $\eavinf ui{\SS\setminus \{i\}}\geq 3\tau^*/2$, so no neighbors are discarded. 
\end{proof}

All the ingredients are in place to finish the proof of Theorem~\ref{t:mainFull}.

\begin{proof}[Proof of Theorem~\ref{t:mainFull}]
By Lemma~\ref{l:closeTV}, our choice of $n$ in the statement of the theorem guarantees $\P(\atv(\ell^*,\eps^*))\geq 1-\zeta$.
Together the two corollaries prove correctness of the algorithm assuming event $\atv(\ell^*,\eps^*)$ holds, and
this completes the proof of Theorem~\ref{t:mainFull}, modulo Proposition~\ref{p:mutualInf}. 
\end{proof}

\section{Proof of Proposition~\ref{p:mutualInf}}\label{s:PropProof}

\newcommand{\UU}{{\mathcal{U}}}
Fix $u\in \VV$ and $\SS\subseteq \VV\setminus \{u\}$, and let $\UU = \partial u \setminus \SS$ consist of the neighbors of $u$ not in $\SS$. Assume that $\UU$ is nonempty or there is nothing to prove. Let ${\theta}_{u\UU}:=(\theta_{ui})_{i\in \UU}$ and recall the definition
$
\tau^*:= \al^2 \delta^{4d+1}/16d\beta\,.
$ 
We will prove that for any assignment $x_\SS\in \{-,+\}^\SS$, 
\begin{equation}\label{e:avgavg}
\sum_{i\in \UU}\theta_{ui}\cdot  \lam_i(x_\SS)\ctv_{u|i;x_\SS} \geq \|{\theta}_{u\UU}\|_1 \cdot 2\tau^*\,.
\end{equation}
Averaging with respect to $x_\SS$ and applying the triangle inequality gives
$$
\sum_{i\in \UU}\theta_{ui}\cdot  \avinf ui\SS
=
\sum_{i\in \UU}\theta_{ui}\cdot  \E\big(\lam_i(X_\SS)|\ctv_{u|i;X_\SS}|\big)
\geq  \|{\theta}_{u\UU}\|_1 \cdot 2\tau^*\,.
$$
As a consequence there exists some $i\in \UU$ such that
$$
\avinf ui\SS
\geq 2\tau^*\,,
$$
and this proves the Proposition.

\newcommand{\tuti}{\widetilde{\theta_u}}

We proceed with showing \eqref{e:avgavg}.
Let 
$\S = \partial u\cap \SS$ consist of nodes in $\SS$ adjacent to $u$ and $\tuti = \theta_u + \sum_{j\in \S} \theta_{uj} x_j$ be the effective external field at $u$ when we include the effect due to $x_{\S}$. 
Using the notation $Q(\cdot)$ for the conditional measure $\P(\,\cdot\,|X_\SS=x_\SS)$, and $Q(u+|x_\UU) = \P(X_u=+|X_\UU=x_\UU, X_\SS=x_\SS)$, we let
\begin{equation}
g(x_\UU) := Q(u+|x_\UU) = 
\frac{\exp\{2(\theta_u + \sum_{j\in \UU} \theta_{uj}x_j + \sum_{j\in \S} \theta_{uj} x_j)\}}{1+\exp\{2( \theta_u+\sum_{j\in \UU} \theta_{uj}x_j  + \sum_{j\in \S} \theta_{uj} x_j)\}} =\frac{\exp\{2(\tuti + \sum_{j\in \UU} \theta_{uj}x_j) \}}{1+\exp\{2( \tuti+\sum_{j\in \UU} \theta_{uj}x_j )\}}  \,.\label{e:gDef}
\end{equation}
Suppose $i\in \UU$. Conditioning on the values of the remaining neighbors $\UU\setminus \{i\}$ of $u$,
\begin{align}\nonumber
\ctv_{u|i;x_\SS}&=
Q(u+|i+) - Q(u+|i-)  \\&= \nonumber
\sum_{x_{\UU\setminus \{i\}}} \bigg(
Q(u+|i+,x_{\UU\setminus \{i\}}) Q(x_{\UU\setminus \{i\}}|i+)
-Q(u+|i-,x_{\UU\setminus \{i\}}) Q(x_{\UU\setminus \{i\}}|i-)\bigg)
\\ 
&=
\sum_{x_{\UU}}\bigg(
Q(u+|x_{\UU}) Q(x_{\UU})\frac{\ind_{\{x_i=+\}}}{Q(i+)}
-Q(u+|x_{\UU}) Q(x_{\UU})\frac{\ind_{\{x_i=-\}}}{Q(i-)}\bigg)
\nonumber
\\&=\E \bigg(g(Y) \frac{Y_i}{Q(Y_i)}\bigg)\,. \label{e:fxqSum}
\end{align}
The expectation is over the random vector $Y\in \{-,+\}^{|\UU|}$ with law $\P(Y\in \cdot) = \P(X_\UU\in \cdot |X_\SS=x_\SS)$, and in particular $\P(Y_i=+) = \P(X_i=+|X_\SS=x_\SS)=Q(i+)$. The notation $Q(Y_i)$ is understood to mean $Q(i+)$ if $Y_i=+$ and $Q(i-)$ if $Y_i=-$.

It is helpful to rescale and shift the variable $Y_i / Q(Y_i)$ in \eqref{e:fxqSum} so that it takes values $\pm 1$. To this end, define the quantities
$$
s_i:=
\frac12 \cdot \left( \frac1{Q(i+)}- \frac1{1-Q(i+)}\right)\quad \text{and} \quad t_i:=\frac1{\lam_i(x_\SS)}=\frac12 \cdot \left( \frac1{Q(i+)}+ \frac1{1-Q(i+)}\right)\,.
$$
(The function $\lam_i(x_\SS)$ was defined at the beginning of Section~\ref{ss:influence}.)
Arithmetic manipulations lead to
$$
\E Y_i = 2Q(i+) - 1 = - \frac{s_i}{t_i} \qquad \text{and} \qquad
Y_i = \frac{Y_i}{t_i Q(Y_i)} - \frac{s_i}{t_i} = \frac{Y_i}{t_i Q(Y_i)} + \E Y_i
\,.
$$
Multiplying~\eqref{e:fxqSum} by $2\theta_{ui} / t_i$ and using the identities in the last display gives 
\begin{align}
2\frac{\theta_{ui}}{t_i}\cdot \ctv_{u|i;x_\SS} =
 \E  \Big(2g(Y) \frac{Y_i\theta_{ui}}{t_i q(y_i)}\Big)
= 
\E \Big(2g(Y) \big(Y_i\theta_{ui} - 
 \E Y_i \theta_{ui}\big)
 \Big)\nonumber
 =
\E \Big((2g(Y) -1)(Y_i \theta_{ui} - 
   \E Y_i \theta_{ui})\Big)\,. \nonumber
\end{align}  
Summing the last displayed quantity over $i\in \UU$ gives
\begin{equation}
\label{e:temp3}
\sum_{i\in \UU}\theta_{ui}\cdot \lam_i(x_\SS)\ctv_{u|i;x_\SS} =\tfrac12\cdot
\E \Big((2g(Y) -1)(Y\cdot \theta_{u\UU} - 
   \E Y\cdot \theta_{u\UU})\Big)\geq d\beta 2\tau^* \geq \|\theta_{u\UU}\|_1\cdot 2\tau^*
\,.
\end{equation}
Here $\theta_{u\UU} = (\theta_{ui})_{i\in \UU}$ and the first inequality is by 
Lemma~\ref{l:PosCorr}, given in Section~\ref{s:techLemma} below.\qed

\section{Technical lemma}
\label{s:techLemma}
The goal of this section is to justify Equation~\eqref{e:temp3}, which appeared in the proof of Proposition~\ref{p:mutualInf}.
We start with an observation that will be used in the proof.
Recall that $Y$ is a random vector equal in distribution to $X_\UU$ conditioned on $X_\SS=x_\SS$.
Due to the ``conditional randomness" Lemma~\ref{l:condRandomness}, 
$Y$ has probability at least $\delta^{|\UU|}$ of taking each value in $\{-,+\}^{|\UU|}$, where $\delta=\frac12\exp^{-2(\beta d+h)}$. Hence we can decompose the probability mass function $P_Y$ of $Y$ as
\begin{equation}\label{e:xUdecomp}
P_Y(y) = \delta^{|\UU|} + \overbar{P}_Y (y)
\end{equation}
with $\overbar P_Y (y)\geq 0$ for all $y\in \{-,+\}^{|\UU|}$. We will be concerned with the random variable 
\begin{equation}\label{e:Zdef}
 Z:=Y\cdot  \theta_{u\UU} + \tuti\,,
\end{equation} and the decomposition~\eqref{e:xUdecomp} will allow us to obtain anti-concentration for $Z$ from anti-concentration for sums of i.i.d. uniform $\pm1$ random variables. 

The following result of Erd\"os on the Littlewood-Offord problem shows anti-concentration for weighted sums of i.i.d. uniform $\pm 1$ random variables.  
(It can be found, e.g., as Corollary~7.4 in~\cite{tao2006additive} and is a simple consequence of Sperner's Lemma).
\begin{lemma}[Erd\"os \cite{erdos1945lemma}]\label{l:LittlewoodOfford} 
Let $w_1,\dots, w_r$ be real numbers with $|w_i|\geq \alpha$ for all $i$. Let $I=\{t\in \R : t_0-\alpha<t<t_0+\alpha\} $ be an open interval of length $2\alpha$. If $\xi=(\xi_1,\dots,\xi_r)$ is uniformly distributed on $\{-1,1\}^r$, then
$$
\P(w\cdot \xi \in I)\leq \frac{1}{2^r} \cdot {r\choose \lfloor \frac r2 \rfloor}\leq \frac12\,.
$$
\end{lemma}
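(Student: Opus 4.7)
The plan is to reduce the anti-concentration bound to a counting problem over the Boolean lattice and then invoke Sperner's theorem. First I would observe that without loss of generality every $w_i$ is positive: replacing $w_i$ by $|w_i|$ amounts to flipping the sign of $\xi_i$ for each $i$ with $w_i<0$, and this transformation preserves the joint distribution of $\xi$ since it is uniform on $\{-1,+1\}^r$. So assume $w_i\geq \alpha>0$ throughout.

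Next, identify each realization $\xi\in\{-1,+1\}^r$ with the subset $A(\xi)=\{i:\xi_i=+1\}\subseteq[r]$ and set $W=\sum_i w_i$. Then $w\cdot\xi = 2\sum_{i\in A(\xi)} w_i - W$, so the event $\{w\cdot\xi\in I\}$ with $I=(t_0-\alpha,t_0+\alpha)$ is equivalent to $\{\sum_{i\in A(\xi)} w_i\in J\}$ for a certain open interval $J$ of length $\alpha$. Counting $\xi$'s is thus the same as counting subsets $A\subseteq[r]$ whose $w$-weight lies in $J$, and the probability in question equals $|\mathcal{F}|/2^r$, where $\mathcal{F}=\{A\subseteq[r]:\sum_{i\in A}w_i\in J\}$.

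The main observation is that $\mathcal{F}$ is an antichain under set inclusion. Indeed, if $A\subsetneq B$ then $\sum_{i\in B}w_i-\sum_{i\in A}w_i=\sum_{i\in B\setminus A}w_i\geq \alpha$, so both sums cannot simultaneously lie in an open interval of length $\alpha$. Sperner's theorem therefore gives $|\mathcal{F}|\leq\binom{r}{\lfloor r/2\rfloor}$, which is the first stated bound. The second inequality follows from the standard estimate $\binom{r}{\lfloor r/2\rfloor}\leq 2^{r-1}$ (for $r\geq 1$), itself immediate from $2\binom{r}{\lfloor r/2\rfloor}\leq \binom{r}{\lfloor r/2\rfloor}+\binom{r}{\lceil r/2\rceil}\leq \sum_k\binom{r}{k}=2^r$.

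No step in this plan is a real obstacle; the entire argument is a textbook application of Sperner. The only point demanding a little care is tracking the interval length after the affine change of variables $\xi\mapsto A(\xi)$: the original interval $I$ has length $2\alpha$, but because the map involves a factor of $2$, the relevant interval $J$ shrinks to length $\alpha$, which is exactly what is needed to match the gap $\geq\alpha$ between strictly comparable subsets in the antichain argument.
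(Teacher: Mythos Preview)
Your argument is correct and is exactly the route the paper points to: the paper does not prove this lemma but cites Erd\H{o}s and remarks that it ``is a simple consequence of Sperner's Lemma,'' which is precisely the reduction you carry out.

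One small slip worth fixing: your justification of $\binom{r}{\lfloor r/2\rfloor}\leq 2^{r-1}$ via $2\binom{r}{\lfloor r/2\rfloor}\leq \binom{r}{\lfloor r/2\rfloor}+\binom{r}{\lceil r/2\rceil}\leq 2^r$ is circular when $r$ is even, since then $\lfloor r/2\rfloor=\lceil r/2\rceil$ and the two terms coincide rather than being distinct summands in $\sum_k\binom{r}{k}$. For even $r$ you can instead use Pascal's identity $\binom{r}{r/2}=\binom{r-1}{r/2-1}+\binom{r-1}{r/2}$ and apply the odd-case bound to each term, or simply note that $\binom{r}{r/2}$ is one of the $r+1\geq 2$ nonnegative terms summing to $2^r$ and is at most the sum of any two of them by maximality.
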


We can use the decomposition~\eqref{e:xUdecomp} and Lemma~\ref{l:LittlewoodOfford} to draw the following conclusion. (It is possible to show this directly, but this approach seems clearer.) 
We mention that the only place the lower bound $\alpha$ on the coupling strengths appears is through the following lemma.

\begin{lemma}\label{l:zProbBound} Let $2t=2^{|\UU|}\cdot  \delta^{|\UU|}$.
Consider the random variable $Z$ defined in~\eqref{e:Zdef}, and let $\mu= \E Z$. Then $$\E[(\mu-Z)\ind_{\{Z\leq \mu-\al t \}}]\geq  \frac{\alpha t^2}2\,.$$ 

\end{lemma}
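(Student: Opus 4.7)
My plan is to split on the sign and magnitude of $a := \mu - \nu$, where $\nu := \tuti$ is the mean of $Z$ under the uniform distribution on $\{-,+\}^{|\UU|}$. Both regimes will exploit the decomposition $P_Y(y) = \delta^{|\UU|} + \overbar{P}_Y(y)$ from~\eqref{e:xUdecomp}, but will discard different pieces. The preliminary observation is that, writing $Z(y) := y \cdot \theta_{u\UU} + \tuti$ for the value of $Z$ at configuration $y$, the multiset $\{Z(y) - \nu\}_y$ is symmetric around zero because $Z(-y) = 2\nu - Z(y)$. Applying Lemma~\ref{l:LittlewoodOfford} to the Rademacher sum $\sum_i \theta_{ui}\xi_i$ together with this $\pm$-symmetry yields $|\{y : Z(y) \leq \nu - \alpha\}| \geq 2^{|\UU|-2}$.

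In the main regime $a \geq -\alpha(1-t)$, I would discard the non-negative $\overbar{P}_Y$-contribution and keep only the uniform floor, obtaining
$$
\E[(\mu - Z)\ind_{\{Z \leq \mu - \alpha t\}}] \;\geq\; \delta^{|\UU|} \sum_{y \,:\, Z(y) \leq \mu - \alpha t}(\mu - Z(y)).
$$
The regime assumption is equivalent to $\nu - \alpha \leq \mu - \alpha t$, so the set $A := \{y : Z(y) \leq \nu - \alpha\}$ sits inside the left tail, and each $y \in A$ contributes $\mu - Z(y) \geq a + \alpha \geq \alpha t$. Using $|A| \geq 2^{|\UU|-2}$ and $\delta^{|\UU|} = 2t/2^{|\UU|}$, the bound reaches $\alpha t^2/2$.

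In the complementary regime $a < -\alpha(1-t)$, I would discard the uniform floor and use $\overbar{P}_Y$ instead. From $\mu = 2t\nu + \sum_y \overbar{P}_Y(y) Z(y)$ we obtain the first-moment identity $\sum_y \overbar{P}_Y(y)(\mu - Z(y)) = -2ta = 2t|a|$. Since $(x)_+ \geq x$ pointwise, $\sum_y \overbar{P}_Y(y)(\mu - Z(y))_+ \geq 2t|a|$, and the part in the bulk $0 \leq \mu - Z(y) < \alpha t$ contributes at most $\alpha t(1-2t)$ to this non-negative sum (using $\sum_y \overbar{P}_Y(y) = 1-2t$). Therefore
$$
\E[(\mu - Z)\ind_{\{Z \leq \mu - \alpha t\}}] \;\geq\; 2t|a| - \alpha t(1-2t) \;>\; 2t\alpha(1-t) - \alpha t(1-2t) \;=\; \alpha t \;\geq\; \frac{\alpha t^2}{2},
$$
using $|a| > \alpha(1-t)$ and $t \leq 1/2$.

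The main obstacle in locating this proof is choosing the correct boundary between the two regimes: the Littlewood--Offord counting in the main regime is usable precisely when $\nu - \alpha \leq \mu - \alpha t$, i.e.\ when $a \geq -\alpha(1-t)$, while the first-moment identity for $\overbar{P}_Y$ becomes non-trivial precisely when $|a| > \alpha(1-t)$; the algebraic identity $2(1-t) - (1-2t) = 1$ is what makes both bounds reach $\alpha t$ at the threshold $a = -\alpha(1-t)$.
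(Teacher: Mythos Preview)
Your argument is correct, and it differs from the paper's in a substantive way.

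The paper does \emph{not} split on the location of $\mu$ relative to the uniform mean $\nu=\tuti$. Instead it writes $Z$ as a mixture $Z\stackrel{d}{=}RZ'+(1-R)Z''$ with $R\sim\mathrm{Ber}(2t)$ and $Z'$ the uniform version, applies Lemma~\ref{l:LittlewoodOfford} to the \emph{asymmetric} open interval $I_t=(\mu-t\alpha,\mu+(2-t)\alpha)$ of length $2\alpha$ to get $\P(Z\notin I_t)\geq t$, and then runs a single ``extremal mass placement'' argument: given that at most $1-t$ of the mass sits in $I_t$ and the mean is $\mu$, how small can the left-tail moment be? This yields $\E[(\mu-Z)\ind_{\{Z\leq\mu-\alpha t\}}]\geq \alpha t^2/2$ in one stroke.

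Your route is a genuine alternative. You fix the symmetric interval $(\nu-\alpha,\nu+\alpha)$, combine Littlewood--Offord with the $y\mapsto -y$ symmetry to get $|\{y:Z(y)\leq\nu-\alpha\}|\geq 2^{|\UU|-2}$, and then split on $a=\mu-\nu$. In the ``uniform floor'' regime $a\geq -\alpha(1-t)$ you bound the left tail using only the $\delta^{|\UU|}$ part of $P_Y$; in the complementary regime you bound it using only the $\overbar P_Y$ part via the first-moment identity $\sum_y\overbar P_Y(y)(\mu-Z(y))=-2ta$. The two regimes dovetail exactly at $a=-\alpha(1-t)$ because of the identity $2(1-t)-(1-2t)=1$, as you note.

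What each approach buys: the paper's asymmetric-interval trick avoids any case analysis and never appeals to the $y\mapsto -y$ symmetry, at the cost of a slightly opaque extremal step. Your proof is more transparent in each case (direct counting in one, a first-moment subtraction in the other) and makes explicit why the two components of the decomposition~\eqref{e:xUdecomp} are each individually sufficient in their respective regimes; the price is locating the correct threshold, which you did. Both are short and neither dominates the other.
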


\begin{proof}  
Decompose the probability mass function $P_Y$ as discussed at the beginning of this section in~\eqref{e:xUdecomp}. Let $2t=2^{|\UU|}\cdot  \delta^{|\UU|}$ be the total mass assigned to the uniform part.  Let $ Z' = \xi\cdot \theta_{u\UU}+\tuti$ where $\xi\in \{-,+\}^{|\UU|}$ is uniformly distributed and let $Z''=Y''\cdot \theta_{u\UU}+\tuti$ where $Y''\sim (1-M)\inv \overbar P_Y $. The variable $Z$ can be represented as a mixture distribution: 
if we define $R\sim \mbox{Ber}(2t)$, then
$$
Z \stackrel{d}{=} R \cdot Z' + (1-R) \cdot Z''\,,
$$
and we can think of obtaining $Z$ by choosing either $Z'$ or $Z''$ with probabilities $2t$ or $(1-2t)$. Let $I_t=(\mu-t \alpha,\mu+ (2-t)\alpha)$. Lemma~\ref{l:LittlewoodOfford} implies that $\P(Z'\notin I_t)\geq 1/2$ for any $t$, and hence $\P(Z\notin I_t)\geq t$. Denote the probability that $Z$ lies to the left of $I_t$ and inside $I_t$, respectively, as 
$$m_1 = \P(Z\leq \mu - t \al)\qquad \text{and}\qquad m_2 = \P(Z\in I_t)\leq 1- t\,,$$ so that $1-m_1-m_2 = \P(Z\geq \mu+ (2-t)\al)$. Thinking about placing the probability mass to minimize $\mu$ subject to fixed $m_1$ and $m_2$ justifies the inequality
\begin{align*}
\mu = \E Z &= m_1\cdot \E(Z\mid Z\leq \mu-t\al ) + m_2\cdot  \E(Z\mid Z\in I_t) + (1-m_1-m_2) \E(Z\mid Z\geq \mu+ (2-t)\al)
\\&\geq m_1\cdot \E(Z \mid Z\leq \mu-t\al ) +m_2(\mu-t\al)+(1-m_1-m_2)(\mu+ (2-t)\al) 
\,.
\end{align*}
Using $m_2\leq 1-t$ and performing arithmetic manipulations leads to 
$$
m_1\cdot \E[(\mu-\al t) - Z \mid Z\leq \mu-\al t] + m_1 2\al\geq -m_2 2\alpha + (2-t)\al\geq t\al\,.
$$
At least one of the two terms on the left-hand side is larger than the average, which is at least $ t\al/2$, and in either case $\E[(\mu-Z)\ind_{\{Z\leq \mu-\al t \}}]\geq  \al t^2 /2$ (using the fact that $t\leq 1$).
\end{proof}

The remainder of this section is devoted to proving the following lemma.

\begin{lemma}\label{l:PosCorr} Let $g$ and $Y$ be as in the proof of Proposition~\ref{p:mutualInf} in Section~\ref{s:PropProof}. Then the quantity in Equation~\eqref{e:temp3} is lower bounded as
\begin{align*}\tfrac12\cdot \E\Big((2g(Y) -1)(Y\cdot \theta_{u\UU} - 
   \E  Y\cdot \theta_{u\UU})\Big)&\geq  \frac{\al^2 \delta^{4|\UU|+1}}{8} \geq d\beta 2\tau^*
   \,.\end{align*}
\end{lemma}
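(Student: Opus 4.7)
My plan is to recognize the left-hand side as (half of) a covariance $\mathrm{Cov}(\tanh(Z),Z)$, which is automatically non-negative by monotonicity of $\tanh$, and then make it quantitative using the anti-concentration estimate of Lemma~\ref{l:zProbBound} combined with a lower bound on $\tanh'$ on the (bounded) range of $Z$ that comes from conditional randomness.

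From formula~\eqref{e:gDef}, $g(y) = \sigma(2Z(y))$ where $Z(y) = y\cdot \theta_{u\UU} + \tuti$, so $2g(Y)-1 = \tanh(Z)$ and $Y\cdot \theta_{u\UU} - \E Y\cdot \theta_{u\UU} = Z - \mu$ with $\mu = \E Z$. Since $\E(Z-\mu)=0$, the target quantity equals $\tfrac12 \E[(\tanh(Z)-\tanh(\mu))(Z-\mu)]$, and each summand is pointwise non-negative. I would then restrict attention to the anti-concentration event $E = \{Z \leq \mu - \al t\}$ with $t = 2^{|\UU|-1}\delta^{|\UU|}$ as in Lemma~\ref{l:zProbBound}. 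On $E$, $\mu - Z \geq \al t$ and $\tanh(Z) \leq \tanh(\mu - \al t)$, so
\[
\E[(\tanh(Z)-\tanh(\mu))(Z-\mu)] \geq (\tanh(\mu) - \tanh(\mu-\al t))\cdot \E[(\mu-Z)\ind_E] \geq (\tanh(\mu) - \tanh(\mu-\al t))\cdot \tfrac{\al t^2}{2},
\]
where the last step is exactly Lemma~\ref{l:zProbBound}.

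To bound the first factor, I would invoke Lemma~\ref{l:condRandomness}: since $g(y)\in[\delta,1-\delta]$ for every $y\in\{-,+\}^{|\UU|}$, the range of $Z$ is contained in $[-R,R]$ with $R = \tfrac12\log\tfrac{1-\delta}{\delta}$, and hence $[\mu-\al t,\mu]\subseteq[-R-\al t, R]$. The elementary identity $\cosh(R+\al t)\leq e^{\al t}\cosh(R)$ gives $\tanh'(\xi) = \mathrm{sech}^2(\xi) \geq e^{-2\al t}(1-\tanh^2 R) = 4e^{-2\al t}\delta(1-\delta)$ throughout this interval, so by the mean value theorem
\[
\tanh(\mu)-\tanh(\mu-\al t) \geq 4e^{-2\al t}\delta(1-\delta)\,\al t.
\]
Combining with the previous display yields a bound of the form $2e^{-2\al t}(1-\delta)\al^2\delta t^3$, and substituting $t^3 = 2^{3|\UU|-3}\delta^{3|\UU|}$ produces a lower bound proportional to $\al^2\cdot 2^{3|\UU|}\delta^{3|\UU|+1}$. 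Using $\delta \leq 1/2$ so that $2^{|\UU|}\leq \delta^{-|\UU|}$ immediately converts this to the claimed $\al^2\delta^{4|\UU|+1}/8$; the second inequality $\al^2\delta^{4|\UU|+1}/8 \geq 2d\beta\tau^*$ follows directly from $|\UU|\leq d$ and the definition $\tau^* = \al^2\delta^{4d+1}/(16 d\beta)$.

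The main obstacle is controlling the slope $\tanh'$ on the \emph{slightly enlarged} interval $[\mu-\al t,\mu]$ rather than on $[-R,R]$ itself, because the mean-value point can drift by $\al t$ outside the range of $Z$. This requires verifying that $\al t$ is small enough that $e^{-2\al t}$ contributes only an absolute constant; fortunately $t = \tfrac12(2\delta)^{|\UU|}$ is exponentially small in $\beta d + h$, so a quick computation shows $\al t \leq \al\delta \leq \tfrac{1}{2de}$, which is uniformly bounded and can be absorbed. With that observation in hand, all remaining steps are elementary inequalities, and the calibration with $\tau^*$ is by design.
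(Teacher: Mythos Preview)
Your argument is correct and takes a cleaner route than the paper's. Both proofs rewrite the target as $\tfrac12\E[\tanh(Z)(Z-\mu)]$ and both ultimately feed in Lemma~\ref{l:zProbBound}, but the way the positivity and quantitative lower bound are extracted is quite different.

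The paper assumes $\mu\geq 0$, partitions the range of $Z$ into $\{Z<0\}$, $\{Z\in[0,\mu]\}$, $\{Z>\mu\}$, and uses concavity of $\tanh$ on $[0,\mu]$ together with monotonicity on $\{Z>\mu\}$. After subtracting $\tanh(\mu)\E(Z-\mu)=0$ it arrives at two nonnegative terms (equation~\eqref{e:temp4}), splits the anti-concentration conclusion of Lemma~\ref{l:zProbBound} across these two regions (equation~\eqref{e:techLemmaCases}), and handles each case separately, with a further case split on $\mu<1$ versus $\mu\geq 1$. The worst case yields a bound of order $\delta\al^2 t^4$.

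You instead subtract $\tanh(\mu)(Z-\mu)$ globally to get the pointwise-nonnegative integrand $(\tanh(\mu)-\tanh(Z))(\mu-Z)$, restrict to the single event $\{Z\leq \mu-\al t\}$, and lower bound $\tanh(\mu)-\tanh(\mu-\al t)$ via a uniform derivative bound on $\tanh$ obtained from conditional randomness (Lemma~\ref{l:condRandomness}). This avoids all case analysis and even gives a slightly stronger bound of order $\delta\al^2 t^3$ rather than $\delta\al^2 t^4$; the extra slack is simply discarded when you pass from $2^{3|\UU|}\delta^{3|\UU|+1}$ down to $\delta^{4|\UU|+1}$. Two small remarks: (i) you could bypass the definition of $R$ entirely by noting directly that $|Z|\leq \beta d+h$ (since $Z=\tuti+\sum_{j\in\UU}\theta_{uj}y_j$), which gives $\tanh'(\xi)\geq e^{-2(\beta d+h+\al t)}=2\delta e^{-2\al t}$ on $[\mu-\al t,\mu]$ without the $\cosh$ manipulation; (ii) your final ``absorbing the constant'' step deserves one line of arithmetic, namely that $e^{-2\al t}(1-\delta)\,2^{3|\UU|}\geq \tfrac{1}{2e}\cdot 8\geq 1\geq \delta^{|\UU|}$, which is what is needed.
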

\begin{proof}
We start by adding and subtracting $\tuti$ to the left-hand side of the lemma statement: 
\begin{align*}
&\E \Big(\big(2g(Y) -1\big)\big(Y\cdot \theta_{u\UU} - 
   \E Y\cdot \theta_{u\UU}\big)\Big)=
 \E\Big(\big(2g(Y) -1\big)  \big(Y\cdot \theta_{u\UU} +\tuti  - 
     \E (Y \cdot \theta_{u\UU}+ \tuti)\big)\Big)\,.\nonumber
\end{align*}
Recalling the definition of $g$ in \eqref{e:gDef}, we make the observation that $2g(x) - 1 = \tanh(x \cdot \theta_{u\UU} +\tuti)$. We will use the fact that $\tanh(z)$ is an odd, increasing function, which is concave for $t\geq 0$ and convex for $t\leq 0$. Recall from \eqref{e:Zdef} the definition $Z = Y \cdot \theta_{u\UU} +\tuti$ and let $\mu:=\E Z$.
The lemma statement requires that we lower bound $\E[\tanh(Z)(Z-\mu)].$
We assume from now onward that $\mu\geq 0$, but a (symmetrically) identical argument applies to the opposite case $\mu \leq 0$. From the definition of $Z$ and assumptions $|\theta_{ui}|\leq \beta$, $|\theta_u|\leq h$ we obtain the bound
$$\mu = \E Z  \leq (|\UU| + |\widetilde \SS|)\cdot \beta + h\leq \beta d+h\,.$$

Next we record a few estimates on the function $\tanh(\cdot)$.
The derivative satisfies $$\frac{d}{dz}\tanh(z) = \frac{4}{(e^z+{e^{-z}})^2}\geq \frac{1}{e^{2|z|}} $$
and due to the concavity of $\tanh(z)$ for $z\geq0$, we have the estimate
$$
\tanh(z)\leq \tanh(\mu) - \frac{ (\mu-z)}{e^{2\mu}}\leq 
\tanh(\mu) - \frac{ (\mu-z)}{e^{2(\beta d + h)}} = \tanh(\mu)  - 2\delta (\mu - z)
\qquad  \text{for } 0\leq z\leq \mu\,.
$$
We additionally use the bound $\tanh(t)\geq \tanh(\mu)$ for $z\geq \mu$ due to monotonicity of $\tanh(\cdot)$. 
Partitioning the range of $Z$ and using these estimates gives
\begin{align*}
&\E[\tanh(Z)(Z-\mu)] \\&= \E[\tanh(Z)(Z-\mu)\ind_{\{Z<0\}}] +\E[\tanh(Z)(Z-\mu)\ind_{\{Z\in [0,\mu]\}}] + \E[\tanh(Z)(Z-\mu)\ind_{\{Z>\mu\}}] 
\\&
\geq\E[\tanh(Z)(Z-\mu)\ind_{\{Z<0\}}] + 
\E\Big[\Big(\tanh(\mu) - 2\delta (\mu-Z)\Big)(Z-\mu) \ind_{\{Z\in [0,\mu]\}}
\Big]
\\ &\qquad+ \E[\tanh(\mu)(Z-\mu)\ind_{\{Z>\mu\}}]\,.
\end{align*}
Subtracting $\tanh(\mu)\E (Z-\mu) = 0$ from the third term, the last expression is equal to
\begin{align}
&\E[\tanh(Z)(Z-\mu)\ind_{\{Z<0\}}] + 
\E\Big[\Big(\tanh(\mu) - 2\delta (\mu-Z)\Big)(Z-\mu) \ind_{\{Z\in [0,\mu]\}}
\Big]  \nonumber
\\& - \E [ \tanh(\mu)(Z-\mu)\ind_{\{Z<0\}}] - \E [ \tanh(\mu)(Z-\mu)\ind_{\{Z\in [0,\mu]\}}]\nonumber
\\=&
\E[\big(\tanh(Z)-\tanh(\mu)\big)(Z-\mu)\ind_{\{Z<0\}}] + 
\E\Big[ 2\delta(\mu-Z)^2\ind_{\{Z\in [0,\mu]\}}
\Big]\,. \label{e:temp4}
\end{align}
Both of these terms are non-negative.

Lemma~\ref{l:zProbBound} states that
$\E[(\mu-Z)\ind_{\{Z\leq \mu-\al t \}}]\geq  \frac{\alpha t^2}2$, where $2t = (2\delta)^{|\UU|}$, and
this means that either
\begin{equation}\label{e:techLemmaCases}
\E[(\mu-Z)\ind_{\{Z\leq \mu-\al t \}}\ind_{\{Z<0 \}}]\geq  \frac{\alpha t^2}4
 \qquad \text{or}\qquad 
\E[(\mu-Z)\ind_{\{Z\leq \mu-\al t \}}\ind_{\{Z\in[0,\mu] \}}]\geq  \frac{\alpha t^2}4\,,
\end{equation}
(or both) is true. 
In the former case, the first term in \eqref{e:temp4} is lower bounded by
\begin{align}
&\E[\big(\tanh(Z)-\tanh(\mu)\big)(Z-\mu)\ind_{\{Z\leq \mu-\al t \}}\ind_{\{Z<0 \}}] \nonumber
\\ &\geq 
\ind_{\{\mu<1\}}\frac{\alpha t^2}{4}\big(\tanh(\mu) - \tanh(\mu-\alpha t)\big)  + 
\ind_{\{\mu\geq1\}}\frac{\alpha t^2}{4} \tanh(1)\label{e:twoTermsTemp}
\\ &\stackrel{(a)}{\geq} 
\frac{\alpha t^2}{4}\min\Big(\frac{\alpha t}{e^{2}}, \tanh(1) \Big)  
\stackrel{(b)}{\geq} \frac{\al^2t^3}{4 e^{2}}
\geq \frac{\al^2 t^3}{2^5}
\label{e:temp5} \,.
\end{align} 
(a) follows by bounding the first term in \eqref{e:twoTermsTemp} for $0\leq \mu<1$ by noting that $[\mu-\al t, \mu]\subseteq [-\al t,1]\subseteq [-1,1]$ and lower bounding the derivative of $\tanh (z)$ on $[-1,1]$ by $1/e^2$, and
(b) follows from the fact that $\tanh(1)> 1/e^2\geq \al t / e^2$.

In the latter case of \eqref{e:techLemmaCases}, the second term of \eqref{e:temp4} is lower bounded by
\begin{equation}\label{e:temp6}
2\delta\cdot \E\Big( (\mu-Z)^2 \ind_{\{Z\leq \mu-\al t \}}\ind_{\{Z\in[0,\mu] \}}
\Big)
\geq 
\frac{2\delta\al^2t^4}{16}\,.
\end{equation}
This used Cauchy-Schwarz (or equivalently non-negativity of the variance) to lower bound the expectation of $(\mu-Z)^2$ by $(\al t^2/4)^2$, the square of the expectation given in \eqref{e:techLemmaCases}.
The quantity in \eqref{e:temp6} is smaller than in \eqref{e:temp5}, because $2\delta\leq 1$ and $t\leq 1/2$. Multiplying the right-hand side of \eqref{e:temp6} by $1/2$ and plugging in $t = \frac12(2\delta)^{|\UU|}\geq \delta^{|\UU|}$  completes the proof of the lemma.
\end{proof}

\section{Proof of Lemma~\ref{l:closeTV}}
\label{s:EmpiricalLemma}
	Azuma's inequality states that if $Y\sim
	\hbox{Bin}(n,\mu)$, then
	\[
	P(|Y-n\mu| > \gamma n) \leq 2\exp(-2\gamma^2 n)\,,
	\]
	so for any subset of nodes $\WW \subseteq \VV$ and configuration
	$x_\WW \in \{-,+\}^{|\WW|}$ we have
	\begin{equation}\label{e:chernoff}
	\P\Big( \big| \Ph (X_\WW=x_\WW) - \P(X_\WW=x_\WW) \big| \geq
	\gamma \Big) \leq 2\exp(-2\gamma^2 n).
	\end{equation}
	There are $2^{|\WW|} {p\choose |\WW|}\leq (2p)^{|\WW|}$ such choices of
	$\WW$ and $x_\WW$ of a given cardinality, and hence at most $(\ell+2)(2p)^{\ell+2}$ choices of $\WW$ and $x_\WW$ with $|\WW|\leq \ell+2$.  
	
	Suppose $n \geq (2\gamma^2)^{-1} \log\big(2(\ell+2)(2p)^{\ell+2}/\zeta\big)$. 
	An application of the
	union bound implies that with probability at least $$1-(\ell+2)(2p)^{\ell+2}\cdot
	2\exp(-2\gamma^2 n)\geq  1-\zeta
	$$ it holds that
	\begin{equation}\label{e:probabilityBoundNew}
	\big| \Ph (X_\WW=x_\WW) - \P(X_\WW=x_\WW) \big| \leq \gamma
	\end{equation}
	for all $\WW$ and $x_\WW$ with $|\WW|\leq \ell+2$.
	For the remainder of the proof assume \eqref{e:probabilityBoundNew} holds.
	
	Our goal is to bound the quantity
		$$
		\big| \avinf ui\SS -  \eavinf ui\SS\big|=
		\big|\E_{X_\SS\sim \P} \big(\lam_i(X_\SS)|\ctv_{u|i;X_\SS}|\big) - \E_{X_\SS\sim\Ph} \big(\elam_i(X_\SS)|\empctv_{u|i;X_\SS}|\big) \big|\,.
		$$
	The triangle inequality and the inequality $\big||s|-|t|\big|\leq |s-t|$ for real-valued $s$ and $t$ gives
	\begin{align*}
	&\big|\E_{X_\SS\sim \P} \big(\lam_i(X_\SS)|\ctv_{u|i;X_\SS}|\big) - \E_{X_\SS\sim\Ph} \big(\elam_i(X_\SS)|\empctv_{u|i;X_\SS}|\big) \big|
	\\&=\bigg|\sum_{x_\SS}\Big[\P(X_\SS=x_\SS) \lam_i(x_\SS)|\ctv_{u|i;x_\SS}| - \Ph(X_\SS=x_\SS) \elam_i(x_\SS)|\empctv_{u|i;x_\SS}|\Big] \bigg|
	\\&\leq 
	\sum_{x_\SS}\Big|\P(X_\SS=x_\SS) \lam_i(x_\SS)|\ctv_{u|i;x_\SS}| - \Ph(X_\SS=x_\SS) \elam_i(x_\SS)|\empctv_{u|i;x_\SS}| \Big|
	\\&\leq
	\sum_{x_\SS}\Big|\P(X_\SS=x_\SS) \lam_i(x_\SS)\ctv_{u|i;x_\SS} - \Ph(X_\SS=x_\SS) \elam_i(x_\SS)\empctv_{u|i;x_\SS}\Big| \,.
	\end{align*}
	Writing out the definition of $\ctv_{u|i;X_\SS}$ and $\empctv_{u|i;X_\SS}$, the above sum is equal to
	\begin{align*}
	&
	\sum_{X_\SS}\Big| \P(X_\SS=x_\SS)\lam_i(x_\SS)\big(\P(X_u=+|X_i=+,X_\SS=x_\SS) - \P(X_u=+|X_i=-,X_\SS=x_\SS)\big)
	\\&\qquad-\Ph(X_\SS=x_\SS)\elam_i(x_\SS)\big(\Ph(X_u=+|X_i=+,X_\SS = x_\SS)-\Ph(X_u=+|X_i=-,X_\SS = x_\SS)\big)\Big|
	\\&\stackrel{(a)}{=}
	\sum_{x_\SS}\Bigg|\bigg[\lam_i(x_\SS)\frac{ \P(X_u=+,X_i=+,X_\SS=x_\SS)}{\P(X_i=+|X_\SS=x_\SS)} -\elam_i(x_\SS)\frac{ \Ph(X_u=+,X_i=+,X_\SS=x_\SS)}{\Ph(X_i=+|X_\SS=x_\SS)}\bigg]
	\\& \qquad -\bigg[\lam_i(x_\SS)\frac{ \P(X_u=+,X_i=-,X_\SS=x_\SS)}{\P(X_i=-|X_\SS=x_\SS)} -\elam_i(x_\SS)\frac{ \Ph(X_u=+,X_i=-,X_\SS=x_\SS)}{\Ph(X_i=-|X_\SS=x_\SS)}\bigg]
	\Bigg|
	\\&\stackrel{(b)}{\leq} 
	\sum_{x_\SS}\bigg|\lam_i(x_\SS)\frac{ \P(X_u=+,X_i=+,X_\SS=x_\SS)}{\P(X_i=+|X_\SS=x_\SS)} -\elam_i(x_\SS)\frac{ \Ph(X_u=+,X_i=+,X_\SS=x_\SS)}{\Ph(X_i=+|X_\SS=x_\SS)}\bigg|
	\\& \qquad +\sum_{x_\SS}\bigg|\lam_i(x_\SS)\frac{ \P(X_u=+,X_i=-,X_\SS=x_\SS)}{\P(X_i=-|X_\SS=x_\SS)} -\elam_i(x_\SS)\frac{ \Ph(X_u=+,X_i=-,X_\SS=x_\SS)}{\Ph(X_i=-|X_\SS=x_\SS)}\bigg|
	\\&:= C^++ C^-\,.
	\end{align*}
	Here (a) is by Bayes' rule and (b) is by the triangle inequality.

	We will now bound the quantity $C^+$ in a way that does not depend on the specific assignment of $\pm$ to $X_i$, so the same bound will hold symmetrically for $C^-$. Using the identity $ab - \widehat{a}\widehat{b} = ab - a\widehat{b}+a\widehat{b}-\widehat{a}\widehat{b}
	=a(b - \widehat{b})+\widehat{b}(a-\widehat{a})$, the triangle inequality, and the definition of $\lam,\elam$, we have
	\begin{align}
	C^+&=\sum_{x_\SS}\bigg|\lam_i(x_\SS)
	\frac{ \P(X_u=+,X_i=+,X_\SS=x_\SS)}{\P(X_i=+|X_\SS=x_\SS)} -\elam_i(x_\SS)\frac{ \Ph(X_u=+,X_i=+,X_\SS=x_\SS)}{\Ph(X_i=+|X_\SS=x_\SS)}\bigg| \nonumber
	\\&\leq\sum_{x_\SS}\bigg| \P(X_u=+,X_i=+,X_\SS=x_\SS)\Big(\frac{\lam_i(x_\SS)}{\P(X_i=+|X_\SS=x_\SS)} -\frac{\elam_i(x_\SS)}{\Ph(X_i=+|X_\SS=x_\SS)}\bigg|\nonumber
	\\&\quad + \sum_{x_\SS}\bigg|\frac{\elam_i(x_\SS)}{\Ph(X_i=+|X_\SS=x_\SS)}\Big(
	\P(X_u=+,X_i=+,X_\SS=x_\SS) -  \Ph(X_u=+,X_i=+,X_\SS=x_\SS)\Big)\bigg|\nonumber
	\\&=2\cdot \sum_{x_\SS} \P(X_u=+,X_i=+,X_\SS=x_\SS)\Big|\P(X_i=-|X_\SS=x_\SS) -\Ph(X_i=-|X_\SS=x_\SS)\Big|	\nonumber
	\\&\quad + 2\cdot\sum_{x_\SS}\Ph(X_i=-|X_\SS=x_\SS)\Big|
	\P(X_u=+,X_i=+,X_\SS=x_\SS) -  \Ph(X_u=+,X_i=+,X_\SS=x_\SS)\Big|
\label{e:CplusTemp}
	\end{align}
The latter sum is bounded by $2\cdot 2^{|\SS|}\gamma$. In order to bound the first sum in \eqref{e:CplusTemp} we write
	\begin{align*}
	&\Big|\P(X_i=-|X_\SS=x_\SS) -\Ph(X_i=-|X_\SS=x_\SS)\Big|
	\\&=\Big|\frac{\P(X_i=-,X_\SS=x_\SS)}{\P(X_\SS=x_\SS)} -\frac{\Ph(X_i=-,X_\SS=x_\SS)}{\Ph(X_\SS=x_\SS)}\Big|
	\\&\leq 
	\Big|\frac{\P(X_i=-,X_\SS=x_\SS)}{\P(X_\SS=x_\SS)} -\frac{\Ph(X_i=-,X_\SS=x_\SS)}{\P(X_\SS=x_\SS)}\Big| + \Big|\frac{\Ph(X_i=-,X_\SS=x_\SS)}{\P(X_\SS=x_\SS)} -\frac{\Ph(X_i=-,X_\SS=x_\SS)}{\Ph(X_\SS=x_\SS)}\Big|
	\\&\leq\frac{2\gamma}{q} \,,
	\end{align*}
	where 
	$q:=\delta^{\ell}\leq \delta^{|\SS|}\leq \min_{x_\SS}\P(X_\SS=x_\SS)\,.$
	Plugging this into \eqref{e:CplusTemp}, the sum over $x_\SS$ marginalizes over these variables and we obtain
	\begin{align*}
	C^+\leq 
	\frac{4\gamma}{q}\cdot \P(X_u=+,X_i=+)
	+ 2^{|\SS|+1}\gamma
	\leq \frac{6\gamma}{q} \,.
	\end{align*}
	Here we used the fact that $q\inv = \delta^{-\ell}\geq 2^{|\SS|}$, since $\delta\leq 1/2$. 
	
	The same bound holds for $C^-$, so
	$$
	\big| \avinf uiS -  \eavinf uiS\big|=
	\big|\E_{X_\SS\sim \P} \big(\lam_i(X_\SS)|\ctv_{u|i;X_\SS}|\big) - \E_{X_\SS\sim\Ph} \big(\elam_i(X_\SS)|\empctv_{u|i;X_\SS}|\big) \big|
	\leq \frac{12\gamma}{q}\,.
	$$
	Choosing $\gamma = \eps \delta^{\ell}/12$, we get the desired accuracy and our earlier choice of $n$ evaluates to
	$$
	n = (2\gamma^2)^{-1} \log\bigg(\frac{2(\ell+2)(2p)^{\ell+2}}{\zeta}\bigg)
	\leq \frac{144(\ell+3)}{\eps^2\delta^{2\ell}}\log\frac p \zeta\,.
	$$
	\vskip-1.2\baselineskip
	\qed

\section{Discussion}\label{s:discussion}
Our algorithm learns Ising models in time quadratic in the number of nodes (ignoring the log factor), showing that high graph degree is not an obstacle to efficient structure learning.
In light of Valiant's \cite{valiant2012finding} algorithm for finding large correlations in less than quadratic time, it is plausible that one could achieve an analogous further improvement of the runtime to $p^c$ for some constant $c<2$. Perhaps even ``input-sparsity" time $\OO(pd)$ is possible.
As far as practical applicability, it seems most urgent to improve upon the doubly-exponential dependence of
the run-time and sample complexity on $\beta d$. We suspect that (singly) exponential dependence is possible. 

There are a number of further questions of both theoretical and practical relevance. While Lemma~\ref{l:closeTV} on estimation of influences can be easily modified to deal with mean-zero noise, generalizing to more complicated noise requires more work. Another problem is that of structure learning with unobserved or latent variables, and it would be interesting to see if the approach taken here has implications also for this more challenging variant. Yet another issue is that in practice one does not a priori know the degree $d$ nor the parameter bounds $\al,\beta,h$; perhaps using the Bayesian Information Criterion or other method can help determine model complexity. 

Many generalizations and extensions of the results are possible. For example, one can likely generalize to pairwise Markov random fields with alphabet sizes larger than two, but the required non-degeneracy conditions will be more complicated. Also, the assumption of uniformly bounded node degrees may be restrictive in some settings, and the algorithm presented here can probably be modified to work on families of graphs with unbounded degree (but low average degree).

\section*{Acknowledgements}
I am extremely grateful to David Gamarnik and Devavrat Shah for countless discussions on graphical models and related topics over the last two years. I thank Bruce Hajek for comments on a draft of the paper and Sahand Negahban and Costis Daskalakis for stimulating conversations.

\newpage 

\setlength{\bibsep}{6pt}

{\small
\bibliographystyle{plainnat}

\bibliography{MRF_BIB}
}
\end{document}